\newtheorem{theorem}{Theorem}
\newtheorem{lemma}[theorem]{Lemma}
\newtheorem{proposition}[theorem]{Proposition}
\newtheorem{remark}[theorem]{Remark}
\newtheorem{definition}[theorem]{Definition}
\newtheorem{problem}[theorem]{Problem}
\newcommand{\BibTeX}{B\kern-.05em{\sc i\kern-.025em b}\kern-.08em\TeX}
\begin{document}


\begin{frontmatter}


\paperid{123} 


\title{Improving Reinforcement Learning Sample-Efficiency using Local Approximation}


\author[A]{\fnms{Mohit}~\snm{Prashant}}
\author[B]{\fnms{Arvind}~\snm{Easwaran}}


\begin{abstract}
In this study, we derive Probably Approximately Correct (PAC) bounds on the asymptotic sample-complexity for RL within the infinite-horizon Markov Decision Process (MDP) setting that are sharper than those in existing literature. The premise of our study is twofold: firstly, the further two states are from each other, transition-wise, the less relevant the value of the first state is when learning the $\epsilon$-optimal value of the second; secondly, the amount of 'effort', sample-complexity-wise, expended in learning the $\epsilon$-optimal value of a state is independent of the number of samples required to learn the $\epsilon$-optimal value of a second state that is a sufficient number of transitions away from the first. Inversely, states within each other's vicinity have values that are dependent on each other and will require a similar number of samples to learn. By approximating the original MDP using smaller MDPs constructed using subsets of the original's state-space, we are able to reduce the sample-complexity by a logarithmic factor to $O(SA \log A)$ timesteps, where $S$ and $A$ are the state and action space sizes. We are able to extend these results to an infinite-horizon, model-free setting by constructing a PAC-MDP algorithm with the aforementioned sample-complexity. We conclude with showing how significant the improvement is by comparing our algorithm against prior work in an experimental setting.
\end{abstract}

\end{frontmatter}


\section{Introduction}
The field of reinforcement learning (RL) has made rapid progress in recent years, with studies demonstrating its application in complex decision making tasks \cite{23}. As a consequence, several frameworks have been proposed to build complex decision making systems utilizing RL in domains like robotics, healthcare and autonomous transport \cite{22}. However, to ensure trust in these systems, it is necessary to evaluate how successful a learner is during the learning process. Noting this, our aim is to derive the number of timesteps required to learn an $\epsilon$-optimal policy, where $\epsilon \in (0, 1)$ is the distance of the learnt policy from the best possible policy and a timestep is an interaction of the learner with the environment.

A body of work that addresses this question is on Probably Approximately Correct Markov Decision Processes (PAC-MDP). They are a class of algorithms that are applicable to discrete state-action MDPs and are known to provide PAC guarantees on the sample-complexity; that is, they guarantee $\epsilon$-optimal solutions with a specific confidence, $1-\delta$, while having a sample-complexity that is polynomial in the environment size. As it is infeasible to obtain transition and reward models in many domains \cite{4}, we focus on computing sample-complexity bounds for model-free approaches to RL problems.



The current SOTA sample-complexity bound proposed for RL algorithms is $O(SA \log (SA))$ timesteps, where $S$ and $A$ are the state and action space sizes. This sample-complexity is demonstrated by algorithms like Delayed Q-Learning \cite{16}, Variance Reduced Q-Learning \cite{27} and Q-Learning with UCB Exploration \cite{28} amongst other studies \cite{29, 30}. The difficulty in deriving sample-complexity bounds in model-free RL is that the lack of knowledge regarding the underlying MDP results in statistical bounds placed on the learning convergence to be conservative in nature. Citing Delayed Q-Learning and Variance Reduced Q-Learning as examples, the state-action space is over-sampled in trying to bound the variance of updates \cite{28}.

The aim of this study is to present sharper PAC-bounds on the sample-complexity of reinforcement learning within environments endowed with a distance metric. Intuitively, the premise of our approach is two-fold. \textbf{(1)} Firstly, the further two states are from each other, the less relevant the value of the first state is in learning the $\epsilon$-optimal value of the second. Consequently, there exists a distance, dependent on the value of $\epsilon$, beyond which the values of the two states are effectively independent, with high confidence. \textbf{(2)} Secondly, the amount of 'effort' expended to learn the $\epsilon$-optimal value of a state, in terms of interactions the learner has with it, is independent of the effort required to learn the $\epsilon$-optimal value of a sufficiently distant state. The inverse of this is that the amount of effort expended in learning the value of a state is similar to the effort required to learn the values of its transition neighbors.

\begin{table*}[ht]
\captionsetup{justification=centering}
  \caption{\footnotesize A comparison of model-free RL algorithms in terms of sample-complexity for infinite-horizon Markov Decision Processes \\ 
  }
  \footnotesize
  \label{sample-table}
  \centering
  \begin{tabular}{|l|l|}
    \toprule
    Algorithm  & Sample-Complexity \\
    \midrule
    Delayed Q-Learning \cite{16}  & $O \left( \frac{SA}{\epsilon^4 (1-\gamma)^8}  \log(\frac{SA}{\delta\epsilon(1-\gamma)}) \log(\frac{1}{\delta}) \log(\frac{1}{\epsilon(1-\gamma)}) \right)$     \\ \hline 
    
    Speedy Q-Learning \cite{35} & $O \left(\frac{SA}{\epsilon^2 (1-\gamma)^4} 
 \log( \frac{SA}{\delta} ) \right)$      \\ \hline 
    
    Variance Reduced Q-Learning \cite{27}       & $O \left( \frac{SA}{\epsilon^2 (1-\gamma)^3}  \log(\frac{SA}{\delta(1-\gamma)}) \log(\frac{1}{\epsilon})  \right)$  \\ \hline 
    
    Q-Learning with UCB \cite{28}       & $O \left( \frac{SA}{\epsilon^2 (1-\gamma)^7} \log(SA) \log(\frac{1}{\delta}) \log(\frac{1}{\epsilon})  \log(\frac{1}{1-\gamma}) \right)$  \\ \hline 
    
    UCB-Multistage-Advantage \cite{36}       & $O \left( \frac{SA}{\epsilon^2 (1-\gamma)^{5.5}} \log(SA) \log(\frac{1}{\delta}) \log(\frac{1}{\epsilon(1-\gamma)}) \right)$  \\ \hline 
    
    Phased Q-Learning \cite{37}       & $O \left( \frac{SA}{\epsilon^2}  \log(\frac{SA}{\delta}\log(\frac{1}{\epsilon}))  \log({\frac{1}{\epsilon})} \right)$  \\ \hline

    Probabilistic Delayed Q-Learning \textbf{(this work)} & $O \left( \frac{SA}{\epsilon^3 (1-\gamma)^3} \log \left( \frac{A}{\delta(1-\gamma)} \right)  \log \frac{1}{\epsilon} \log \frac{1}{\delta})  \right)$ \\
    \bottomrule \hline
  \end{tabular}
\end{table*}

\vspace{-0.3cm}
\subsection{Our Contributions}

We make fundamental improvements on the sample-complexity bounds for model-free PAC-MDPs by approximating a larger MDP using several smaller MDPs constructed using subsets of the original state-space. With this, we implement a new model-free PAC-MDP algorithm, Probabilistic Delayed Q-Learning (PDQL), and prove its asymptotic sample-complexity bound is $O \left( \frac{SA}{\epsilon^3 (1-\gamma)^3} \log \left( \frac{A}{\delta(1-\gamma)} \right)  \log \frac{1}{\epsilon} \log \frac{1}{\delta}) \right)$. The value of our work lies in the following contributions.

\begin{enumerate}
    \item \textbf{We eliminate a logarithmic dependency on state-space size}, $\log S$, from prior SOTA model-free PAC-MDP convergence bounds, significantly increasing the learning rate when generalizing $\epsilon$-optimal value functions over larger state-action spaces or sparse reward settings.
    \item We experimentally compare PDQL with prior work in terms of convergence rates using benchmark environments and show it generalizes $\epsilon$-optimal solutions faster.
\end{enumerate}

\section{Related Works}


\sloppy The literature on model-free PAC-MDP algorithms is centered around Q-Learning \cite{34}. It should be noted that an extensive amount of analysis has been conducted on Q-Learning and various studies have improved on the base algorithm to improve sample efficiency in experimental settings \cite{29, 52, 53}. Though, with regard to works that have established upper bounds on the sample-complexity of learning $\epsilon$-optimal value functions, one of the earliest studies to derive PAC bounds for model-free RL is Delayed Q-Learning (DQL) \cite{16}. The principle behind this work is to stabilize training using a sampling procedure for Q-value update. This presents conditions for optimality of each update for which the corresponding mathematical analysis evaluates the probability of violation. The sample-complexity bound derived for this algorithm is $O \left( \frac{SA}{\epsilon^4 (1-\gamma)^8}  \log(\frac{SA}{\delta\epsilon(1-\gamma)}) \log(\frac{1}{\delta}) \log(\frac{1}{\epsilon(1-\gamma)}) \right)$, where $\gamma$ is the discount factor. This bound is noted to be quite conservative as it depends on an infinite-length execution of the algorithm, where the transitions from each state-action are not necessarily observed with equal frequency \cite{28, 37}. 

Under the assumption of access to a generative model/simulator, the Variance Reduced Q-Learning (VRQL) algorithm presents a sample-complexity of $O \left( \frac{SA}{\epsilon^2 (1-\gamma)^3}  \log(\frac{SA}{\delta(1-\gamma)}) \log(\frac{1}{\epsilon})  \right)$ \cite{27}. The underlying principle of the algorithm is to bound the variance of each Q-value update using a batched process, similar to \cite{16}. However, the assumption of access to a generative model in this work is able to reduce the sample complexity by a factor of $\epsilon^2 (1-\gamma)^5$ as the implication of this assumption is that the current interaction with the environment can be made independently of the previous interaction, similar to an oracle. This leads to an unbiased sampling of the Q-values in the state-action space and a more stable update than in Delayed Q-Learning. Other notable works that posit similar bounds are Q-Learning with UCB Exploration \cite{28} and Speedy Q-Learning \cite{35}. To the best of our knowledge, all prior work in the derivation of PAC guarantees for model-free RL show that the sample-complexity of learning an $\epsilon$-optimal value function is asymptotically bounded by $O(SA \log(SA))$ timesteps. This matches the known information theoretic bounds for infinite-horizon MDP problems presented in \cite{29, 51, 27}. We present a table of known relevant sample-complexity bounds for PAC-MDPs in Table \ref{sample-table} for comparison.


Our study builds upon the results presented by DQL and VRQL by making the assumption that learning takes place within a discrete environment endowed with a distance metric between states. By utilizing episodes of a finite length as opposed to an infinite-length execution and directing the agent to states that require value updates, we are able to guarantee faster convergence. Furthermore, in doing so, we are also able to reduce the number of samples required to meet the optimality conditions posited in \cite{27} by a factor of $\log S$. 



\section{Problem Formalization and Preliminaries}
\subsection{Notation}
\noindent In this study, the learning guarantee is denoted using error and confidence parameters, $\epsilon, \delta \in (0, 1)$, where the learner is able to generalize a policy with at most $\epsilon$ error with $1-\delta$ confidence. The environment that the learning problem is defined over is a discrete, finite-state MDP, $M$, that is characterized by the tuple, $(\mathcal{S}, \mathcal{A}, \mathcal{T}, \mathcal{R}, \gamma)$; where the state-space, $\mathcal{S}$, of $M$ has a size of $S$; the action-space of $M$, $\mathcal{A}$, has a size of $A$, i.e. the maximum number of actions available to a single state in $M$ is $A$; the underlying transition function, $\mathcal{T} : \mathcal{S} \times \mathcal{S} \times \mathcal{A} \rightarrow [0, 1]$, is a function that determines the transition probability to a state given a state-action in $M$; $\mathcal{R} : \mathcal{S} \times \mathcal{A} \rightarrow [0, 1]$ is a function specifying the reward for executing an action from a state; lastly $\gamma \in (0, 1)$ is the discount factor for future rewards. As the learning problem is model-free, note that $\mathcal{T}$ and $\mathcal{R}$ are unknown to the learner. Furthermore, note that the transition from executing an action on a state is stochastic and, therefore, the state-action reward is also stochastic. Hence, for some $(s, a) \in \mathcal{S} \times \mathcal{A}$, $\mathcal{R}(s, a)$ is the expected reward of the state-action. For the purpose of clarity, assume there also exists a reward function, $R : \mathcal{S} \rightarrow [0, 1]$, that denotes the true reward for visiting a state, unlike $\mathcal{R}$.

The value of a state, $s \in \mathcal{S}$, is equal to the expected cumulative discounted rewards experienced over a transition walk from $s$ for an arbitrary policy that determines the walk. Assuming the rewards experienced by a walk from $s$ over an infinite-horizon are described by the set $\{ r_1, r_2 ... \}$, the value of $s$ is the expectation over all possible walks of the discounted sum of the set, i.e. $V(s) := \mathbb{E} \left[ \sum_{i=0}^{\infty} \gamma^{i}r_{i} \right]$. Similarly, the value of a state evaluated over a fixed interval of $T$-steps, for some positive integer $T$, is $V^T(s) := \mathbb{E} \left[ \sum_{i=0}^{T} \gamma^{i}r_{i} \right]$. This study assumes a greedy policy is implemented for the learner to follow, recursively defining the value function as follows, where $s'$ is the subsequent state following the application of $a$ on $s$. The Q-value of a state-action is defined similarly. Note that the value function is a maximization of the Q-value over $\mathcal{A}$, i.e. $V(s) := \max_{a \in \mathcal{A}} \left( Q(s, a) \right)$. We denote the optimal value and Q-value functions over $M$ as $V_*(s)$ and $Q_*(s, a)$, respectively. 


\subsection{Problem Formalization}
\noindent Assume there exists an optimal value function defined over $M$'s state-space, $V_*(s)$, and a corresponding optimal Q-Value function defined over $M$'s state-action space, $Q_*(s, a)$, such that the application of a greedy policy from any state or state-action maximizes the expected cumulative discounted reward of transition walks enacted by the greedy policy. Letting $t$ be a positive integer, let $V_t(s)$ and $Q_t(s, a)$ be the value and Q-value function approximations learnt by an RL algorithm at timestep $t$. As the class of RL algorithms that we consider in this study are model-free PAC-MDP algorithms, the following definitions are used to provide constraints over the design of a model-free PAC-MDP algorithm \cite{5, 29, 31}.

\begin{definition} \textbf{(PAC-MDP Sample-Complexity)} \label{d1} \cite{5}
    For $\epsilon, \delta \in (0,1)$ and $t \in \mathbb{Z}^+$, a PAC-MDP algorithm applied to MDP $M$ is an algorithm that is able to generalize an $\epsilon$-optimal value function, $V_t(s)$, such that $\mathbb{P}(|V_t(s) - V_*(s)| < \epsilon) > 1-\delta, \forall s \in \mathcal{S}$ following $t$ timesteps; where $t$ is bounded by a polynomial in $S$ and $A$, the state-space and action-space sizes of $M$.
\end{definition}

\begin{definition} \textbf{(Model-Free Space-Complexity)} \label{d2} \cite{29}
    Any model-free learning algorithm applied to MDP $M$ must have a space-complexity bounded by $\widetilde{O}(SA)$, where $S$ and $A$ are the state-space and action-space sizes of $M$.
\end{definition}

The second definition has been utilized in previous studies to restrict algorithms from storing excessive amounts of information regarding the MDP dynamics within memory, distinguishing model-based RL \cite{4, 32}. Using these definitions, we present our model-free PAC-MDP algorithm, PDQL, in Section \ref{3}. From this, the problem being addressed in our study is formalized in the statement below.

\begin{problem} \label{p1}
    Let PDQL be a PAC-MDP algorithm. For the application of PDQL over $M$, for values $\epsilon, \delta \in (0,1)$, derive the greatest lower-bound on $t \in \mathbb{Z}^+$ such that $\mathbb{P}(|V_t(s) - V_*(s)| < \epsilon) > 1-\delta, \; \forall s \in \mathcal{S}$.
\end{problem} 

\subsection{Setting Formalization} \label{ass-sec}
Our work takes place in settings where the environment is endowed with a distance metric (e.g. Euclidean, Manhattan etc.). Our objective is to utilize properties of locality within these settings to make learning more efficient. Though our results strictly apply to these settings, we observe that many real-world problems are set in these environments (e.g. $\mathbb{R}^2$, $\mathbb{R}^3$, lattice etc.) in domains like navigation, control and healthcare \cite{robot1, robot2}. Further, there is extensive work on the application of RL within similar metric-based settings \cite{metric1, metric2, metric3, metric4}.


%

\begin{proposition} \label{ass1} 
    \textbf{(Distance Metric)} The state-space, $\mathcal{S}$, of $M$ is endowed with a distance metric, $\mathcal{D} : \mathcal{S} \times \mathcal{S} \rightarrow \mathbb{R}_{\geq 0}$.
    
    
\end{proposition}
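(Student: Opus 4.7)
The plan is to treat this statement as a setting-level claim whose proof consists of exhibiting a concrete $\mathcal{D}:\mathcal{S}\times\mathcal{S}\to\mathbb{R}_{\geq 0}$ built from the MDP data $(\mathcal{S},\mathcal{A},\mathcal{T},\mathcal{R},\gamma)$ already fixed in the preliminaries, and then verifying the three metric axioms for it. The natural candidate, which requires no external embedding of $\mathcal{S}$, is the shortest-path length in the transition graph: let $d(s,s')$ be the minimum number of consecutive transitions, each of positive probability under some action, required to move from $s$ to $s'$, and set $\mathcal{D}(s,s'):=\max\bigl(d(s,s'),\,d(s',s)\bigr)$ to enforce symmetry. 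This choice matches the intuitive distances (lattice, Manhattan, Euclidean) listed in Section~\ref{ass-sec}, since those benchmark environments all realize the transition graph as a subgraph of the ambient metric space.

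Having fixed $\mathcal{D}$, I would verify the axioms in order. Non-negativity and the identity of indiscernibles both follow immediately from the fact that a zero-length directed path exists only when $s=s'$. Symmetry holds by construction, via the outer $\max$. The triangle inequality $\mathcal{D}(s,s'')\leq\mathcal{D}(s,s')+\mathcal{D}(s',s'')$ reduces to the standard path-concatenation argument on the directed length $d$, and a short case split on which direction realizes each of the three $\max$ values lifts the inequality from $d$ to its symmetrization $\mathcal{D}$.

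The main obstacle will be ensuring $d(s,s')$ is actually finite for every pair. In a generic MDP the transition graph need not be strongly connected, so $d$ can take the value $+\infty$ and $\mathcal{D}$ would then fail to be real-valued. I would address this either by restricting attention to communicating MDPs, or by working within each strongly-connected component separately; both restrictions are mild and are already consistent with the paper's locality intuition, since states that are mutually unreachable are effectively independent for value-learning purposes. A secondary concern, which I would flag rather than resolve here, is that the downstream sample-complexity argument almost certainly needs more than the bare metric axioms — specifically, a quantitative coupling between $\mathcal{D}$ and the $k$-step transition kernel of $M$ — so I expect this proposition will have to be paired with a stronger locality assumption before the $O(SA\log A)$ bound claimed in the abstract can be derived.
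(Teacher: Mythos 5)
There is a mismatch of kind here: the paper does not prove this proposition at all. It is a setting-level postulate --- the metric $\mathcal{D}$ is assumed to be supplied externally by the ambient space in which the environment lives (Euclidean, Manhattan, lattice, etc., per Section~\ref{ass-sec}), and the paper's subsequent Proposition~\ref{ass2} then \emph{separately assumes} how the transition kernel relates to that metric. Your proposal instead tries to manufacture $\mathcal{D}$ internally from the transition graph, which inverts the paper's logic: you are deriving the metric from reachability, whereas the paper takes the metric as primitive and constrains reachability by it. Your own closing remark --- that the downstream argument needs a quantitative coupling between $\mathcal{D}$ and the $k$-step kernel --- is precisely the content of Proposition~\ref{ass2}, so that coupling is not missing from the paper; it is just stated as a second assumption rather than proved.

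Taken on its own terms, your construction also has a gap you name but do not close: for a general MDP the directed hop-distance $d$ is infinite off the strongly connected component, so $\mathcal{D}=\max(d(s,s'),d(s',s))$ is not real-valued, and ``restrict to communicating MDPs'' is a substantive narrowing of the class of $M$ rather than a proof of the proposition as stated. A further problem is that your symmetrized graph metric need not satisfy Proposition~\ref{ass2}: the one-step out-neighbourhood $\bigcup_{a}\mathrm{supp}\,\mathcal{T}(\cdot \mid s,a)$ can have size up to $S$ under diffuse transitions, so the bound $|\mathcal{S}'_s|\leq A$ does not come for free, and states reachable from $s$ in one step but with a long (or nonexistent) return path would sit at $\mathcal{D}$-distance greater than one, breaking the unit-ball characterization the rest of the paper relies on. The honest resolution is to treat Propositions~\ref{ass1} and~\ref{ass2} as joint modelling assumptions on the environment, not as claims admitting proof from the MDP tuple alone.
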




 We use this proposition to further comment on locality and outline transition kernels within these environments.


\begin{proposition}\label{ass2} 
    \textbf{(Locality)} The transition kernel over any state $s \in \mathcal{S}$, is defined over the set of states that are within one unit distance of $s$. Denoting this set as $\mathcal{S}'_s := \{ s' \in \mathcal{S} \; \;| \;\; \mathcal{D}(s, s') \leq 1 \}$, with $|\mathcal{S}'_s| \leq A$.
\end{proposition}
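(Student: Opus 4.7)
The statement reads more like a structural axiom on the MDP than a theorem to derive from scratch, so my plan is to \emph{justify} it as two compatibility conditions relating the metric $\mathcal{D}$ from Proposition~\ref{ass1} to the transition and action structures of $M$. I would split the claim into (i) the support of $\mathcal{T}(s,\cdot,a)$ sits inside $\mathcal{S}'_s$ for every action $a \in \mathcal{A}$, and (ii) the cardinality bound $|\mathcal{S}'_s|\leq A$, and handle them independently.

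For (i), I would calibrate $\mathcal{D}$ so that one unit of distance corresponds to one transition step --- concretely, requiring $\mathcal{D}(s,s')\leq 1$ whenever $\mathcal{T}(s,s',a) > 0$ for some $a$, and extending to farther states by shortest-path distances in the transition graph. This is the natural convention for the locality-based environments (grids, lattices, spatial navigation and control) that Section~\ref{ass-sec} singles out. Once adopted, the support claim follows immediately by unfolding the definition of $\mathcal{S}'_s$: any positive-probability successor of $(s,a)$ lies at distance at most one and hence in $\mathcal{S}'_s$.

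For (ii), the argument is purely combinatorial. Each $s' \in \mathcal{S}'_s \setminus \{s\}$ satisfies $\mathcal{D}(s,s')\leq 1$, so by the compatibility above there exists at least one action $a \in \mathcal{A}$ with $\mathcal{T}(s,s',a)>0$. Mapping every such neighbor to a witnessing action (breaking ties arbitrarily) yields an injection of $\mathcal{S}'_s \setminus \{s\}$ into $\mathcal{A}$, so $|\mathcal{S}'_s| \leq |\mathcal{A}| \leq A$ in the single-successor-per-action case. If transitions from a single action genuinely spread probability over multiple neighbors, I would re-parameterize by treating each (action, positive-probability successor) pair as an effective action, keeping $A$ as an upper bound on the branching factor.

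The main obstacle here is conceptual rather than computational: reconciling the stochastic nature of $\mathcal{T}$ with the hard cardinality bound on $\mathcal{S}'_s$. The clean route, which I would take, is to package the claim as a locality/branching axiom on the admissible MDP class --- consistent with the metric-based RL literature cited in the setting --- rather than trying to derive it from Proposition~\ref{ass1} alone, since nothing in that earlier proposition constrains the transition kernel or the effective branching factor.
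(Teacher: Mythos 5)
The paper gives no proof of this proposition at all: it is stated as a modeling assumption on the admissible environments (note the labels \texttt{ass1}, \texttt{ass2}), accompanied only by the informal remark that the number of unique possible transitions from any state is upper-bounded by the action-space size. Your conclusion --- that the claim should be packaged as a locality/branching axiom calibrated so that one unit of $\mathcal{D}$ corresponds to one transition, with the cardinality bound following from a successor-to-action injection --- is exactly the reading the paper intends, and your additional justification is consistent with (though more explicit than) what the authors provide.
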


We upper-bound the number of unique, possible transitions for any state in $M$ to establish a measure of locality using the size of the action-space. With Proposition \ref{ass2}, we are able to construct sub-MDPs using a subset of states within $M$ and still preserve the overall dynamics of $M$ within the region of the sub-MDP.



\section{Local Approximation of Value Functions}
Within this section, we establish the notion of sub-MDP and, through it, local approximations of value functions. Prior to defining a sub-MDP, we present the following Lemma.

\begin{lemma} \label{l1}
    For $\epsilon, \gamma \in (0,1)$, where $\gamma$ is the discount factor, if $T \geq \log_\gamma (\epsilon(1-\gamma))$, then $V_t(s) - V_t^T(s) \leq \epsilon, \; \forall t \in \mathbb{Z}^+, \forall s \in \mathcal{S}$.
\end{lemma}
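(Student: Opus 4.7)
The plan is to treat this as a pure tail-bound statement on the geometric series of discounted rewards: the difference between the infinite-horizon value and its $T$-step truncation is exactly the expected discounted reward accumulated after step $T$, and this tail can be bounded uniformly because rewards live in $[0,1]$. Nothing about the particular policy used to generate the walk from $s$, nor about the timestep $t$ of learning, actually enters the argument, so the bound will hold uniformly in $s$ and $t$.

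First, I would use the definitions directly to write
\[
V_t(s) - V_t^T(s) \;=\; \mathbb{E}\!\left[\sum_{i=0}^{\infty} \gamma^{i} r_{i}\right] - \mathbb{E}\!\left[\sum_{i=0}^{T} \gamma^{i} r_{i}\right] \;=\; \mathbb{E}\!\left[\sum_{i=T+1}^{\infty} \gamma^{i} r_{i}\right],
\]
where the expectation is over the transition walk originating at $s$. Since $r_i \in [0,1]$ for each step (by the definitions of $\mathcal{R}$ and $R$ in the preliminaries), I can bound each reward pointwise by $1$, pull the expectation through, and obtain the geometric-series upper bound
\[
V_t(s) - V_t^T(s) \;\leq\; \sum_{i=T+1}^{\infty} \gamma^{i} \;=\; \frac{\gamma^{T+1}}{1-\gamma} \;\leq\; \frac{\gamma^{T}}{1-\gamma}.
\]

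Then I would impose the hypothesis $T \geq \log_\gamma\!\bigl(\epsilon(1-\gamma)\bigr)$. Because $\gamma \in (0,1)$, the function $\log_\gamma$ is strictly decreasing, so this inequality is equivalent to $\gamma^{T} \leq \epsilon(1-\gamma)$. Plugging into the previous display yields $V_t(s) - V_t^T(s) \leq \epsilon$, which is the claim. Since the bound on each $r_i$ is independent of the state, the starting point $s$, and the timestep $t$, the conclusion holds for all $s \in \mathcal{S}$ and $t \in \mathbb{Z}^+$.

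There is no real obstacle here, just a bookkeeping subtlety: the slightly loose inequality $\gamma^{T+1}/(1-\gamma) \leq \gamma^{T}/(1-\gamma)$ is what aligns the required threshold exactly with the hypothesis as stated (rather than the sharper $T+1 \geq \log_\gamma(\epsilon(1-\gamma))$). I would flag this looseness explicitly so the reader sees that the stated $T$ is sufficient, even if not the tightest possible constant.
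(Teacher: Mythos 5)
Your proof is correct and follows essentially the same route as the paper's: bound the truncation error by the tail of the geometric series under the maximal reward sequence $r_i \equiv 1$, then invert $\gamma^{T}/(1-\gamma) \leq \epsilon$ using the fact that $\log_\gamma$ is decreasing for $\gamma \in (0,1)$. You are in fact slightly more careful than the paper, which solves $\frac{\gamma^{T+1}}{1-\gamma} \leq \epsilon$ and silently absorbs the resulting off-by-one in the threshold, whereas you explicitly note that the stated $T \geq \log_\gamma(\epsilon(1-\gamma))$ is sufficient though not the sharpest constant.
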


\begin{proof}
Letting an infinite-horizon transition walk from state $s$ at timestep $t$ result in the maximum possible reward sequence, i.e. $\{1, 1, 1 ...\}$, the cumulative discounted reward is the geometric series $V_t(s) = \sum_0^\infty \gamma$. Noting that $V_t^T(s) = \sum_0^T \gamma$, for the condition $V_t(s) - V_t^T(s) \leq \epsilon$ to hold, the following must be true and solving this expression for $T$ yields the bound presented in the Lemma statement.
    \begin{equation*}
        \sum_0^\infty \gamma - \sum_0^T \gamma = \gamma^{T+1} + \gamma^{T+2} ... = \frac{\gamma^{T+1}}{1-\gamma}  \leq \epsilon
    \end{equation*}
\end{proof}

Using Lemma \ref{l1}, noting MDP $M$ is characterized by tuple $(\mathcal{S}, \mathcal{A}, \mathcal{T}, \mathcal{R}, \gamma)$, we define a sub-MDP $M_1$ within $M$ as follows.

\begin{definition} \label{d3}
    \textbf{(Sub-MDP)} A sub-MDP $M_1$ is constructed using parameters $s_1 \in \mathcal{S}$ and $\epsilon \in (0, 1)$. $M_1$ is centered on $s_1$ and is characterized by tuple $(\mathcal{S}_1, \mathcal{A}, \mathcal{T}, \mathcal{R}, \gamma))$. The state-space of sub-MDP $M_1$ is $\mathcal{S}_1 := \{ s \in \mathcal{S} \;\; | \;\; \mathcal{D}(s_1, s) < \log_\gamma (\epsilon(1-\gamma)) \}$. As $\mathcal{S}_1 \subset \mathcal{S}$, a transition defined in $M$ from a state within $\mathcal{S}_1$ to a state outside of $\mathcal{S}_1$ results in a self-loop within $M_1$.
\end{definition}

Per Proposition \ref{ass2}, the sub-MDP is centered on a state within $M$ and constructed using a breadth-wise expansion, denoted radius, of $T = \lceil \log_\gamma (\epsilon(1-\gamma)) \rceil$ transitions from the center state. Intuitively, the size of the sub-MDP $M_1$ determines how close the optimal policy trained over it is to the optimal policy trained over the MDP $M$ for state $s_1$.

\begin{remark} \label{rem1}
    \textbf{(Sub-MDP Size)} Let a sub-MDP be constructed using parameters $s_1 \in \mathcal{S}$ and $\epsilon \in (0,1)$. It will have a transition radius of $\lceil \log_\gamma (\epsilon(1-\gamma)) \rceil$ and its state-space size is bounded by $\lceil \log_\gamma ^{A} (\epsilon(1-\gamma)) \rceil $.
\end{remark}

The premise of our study is as follows, the further away a state $s_2 \in \mathcal{S}$ is from $s_1 \in \mathcal{S}$, the less bearing $V_*(s_2)$ has on $V_*(s_1)$. This is formalized in Lemma \ref{l1}, which determines the length of rollout required for finite value function approximation $V^T_*(s_1)$ to approximate $V_*(s_1)$. As such, an optimal value function, $V_{*1}(s)$, generalized over a sub-MDP $M_1$ and state-space $\mathcal{S}_1 \subset \mathcal{S}$, parametrized by the center state $s_1$ and radius $T = \lceil \log_\gamma (\epsilon(1-\gamma)) \rceil$, is $\epsilon$-optimal with respect to $M$ at $s_1$. Furthermore, $V_{*1}(s)$ is relatively less optimal with respect to $M$ further away from the center. This is formalized in Lemma \ref{l2}.

\begin{lemma} \label{l2}
     Let sub-MDP $M_1$ be constructed from MDP $M$ using a subset of states $\mathcal{S}_1 \subset \mathcal{S}$, centered on state $s_1 \in \mathcal{S}_1$, as in Definition~\ref{d3}. Letting $V_*(s)$ be the optimal value function over $M$ and $V_{*1}(s)$ be the optimal value function over $M_1$, $| V_{*1}(s) - V_*(s) | \leq \epsilon/\gamma^{\mathcal{D}(s_1, s)} $ for all $s \in \mathcal{S}_1$.
\end{lemma}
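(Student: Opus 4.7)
My plan is to reduce the comparison of the two optimal values to a comparison of fixed-policy values in the two MDPs, and then bound that difference by coupling trajectories up to the first escape time from $\mathcal{S}_1$. Concretely, let $\pi^*$ be an optimal policy for $M$ and $\pi_1^*$ an optimal policy for $M_1$. By optimality of $\pi_1^*$ in $M_1$, $V_{*1}(s) = V^{M_1}_{\pi_1^*}(s) \geq V^{M_1}_{\pi^*}(s)$, so
\[
V_*(s) - V_{*1}(s) \;\leq\; V^{M}_{\pi^*}(s) - V^{M_1}_{\pi^*}(s),
\]
and symmetrically $V_{*1}(s) - V_*(s) \leq V^{M_1}_{\pi_1^*}(s) - V^{M}_{\pi_1^*}(s)$. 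Hence it suffices to show that for every stationary policy $\pi$,
\[
\bigl| V^{M}_{\pi}(s) - V^{M_1}_{\pi}(s) \bigr| \;\leq\; \epsilon/\gamma^{\mathcal{D}(s_1,s)}.
\]

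For a fixed $\pi$ and starting state $s \in \mathcal{S}_1$ with $d := \mathcal{D}(s_1,s)$, I would couple the two trajectories by running the same policy against common random bits, so that the trajectories in $M$ and $M_1$ coincide up to the first time $\tau$ at which the chain in $M$ would leave $\mathcal{S}_1$ (in $M_1$ that transition is replaced by a self-loop, so the processes diverge exactly at step $\tau$). The key geometric observation, provided by Proposition~\ref{ass2}, is that each transition moves at most one unit of distance, so after $k$ steps the distance from $s_1$ is at most $d+k$, which is below $T := \lceil \log_\gamma(\epsilon(1-\gamma)) \rceil$ as long as $k < T-d$. Consequently $\tau \geq T-d$ almost surely.

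Writing the value difference as the expected discounted difference in per-step rewards from $\tau$ onward, using that rewards lie in $[0,1]$ (so $|r_t - r_t'| \leq 1$) and that $\tau \geq T-d$, I obtain
\[
\bigl| V^{M}_{\pi}(s) - V^{M_1}_{\pi}(s) \bigr| \;\leq\; \mathbb{E}\!\left[\sum_{t=\tau}^{\infty} \gamma^{t}\right] \;=\; \mathbb{E}\!\left[\frac{\gamma^{\tau}}{1-\gamma}\right] \;\leq\; \frac{\gamma^{T-d}}{1-\gamma}.
\]
Finally, by the choice of $T$ in Lemma~\ref{l1}, $\gamma^{T}/(1-\gamma) \leq \epsilon$, so $\gamma^{T-d}/(1-\gamma) \leq \epsilon/\gamma^{d} = \epsilon/\gamma^{\mathcal{D}(s_1,s)}$, which yields the claim after combining with the optimality reduction.

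The main obstacle I anticipate is making the coupling and the stopping-time argument precise for the stochastic transition kernel: I need to verify carefully that the two kernels agree on all states of distance strictly less than $T$ from $s_1$, that the deterministic bound $\tau \geq T-d$ really holds under the locality proposition (since locality bounds one-step displacement), and that the self-loop modification at the boundary does not disturb the trajectories prior to $\tau$. Once that coupling is nailed down, the remaining estimates are routine geometric-series bounds.
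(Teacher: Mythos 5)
Your proposal is correct and follows essentially the same route as the paper: both arguments rest on the observation that a state $s$ at distance $\mathcal{D}(s_1,s)$ from the center retains an effective transition radius of $T-\mathcal{D}(s_1,s)$ inside $\mathcal{S}_1$, and then apply the geometric-tail bound of Lemma~\ref{l1} at that reduced horizon to get $\gamma^{T-\mathcal{D}(s_1,s)}/(1-\gamma)\leq \epsilon/\gamma^{\mathcal{D}(s_1,s)}$. The paper asserts this informally ("the largest sub-MDP centered on $s$ has radius $T-\mathcal{D}(s_1,s)$; extending Lemma~\ref{l1}\dots"), whereas you supply the two steps it omits — the reduction from optimal-value differences to fixed-policy value differences and the coupling/stopping-time argument guaranteeing the trajectories agree for $T-\mathcal{D}(s_1,s)$ steps — so your version is a more rigorous instantiation of the same idea rather than a different proof.
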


\begin{proof}
    Per Definition \ref{d3}, sub-MDP $M_1$ is constructed with a transition radius of $T = \lceil \log_\gamma (\epsilon(1-\gamma)) \rceil$. By Lemma \ref{l1}, the theoretical optimal value function that can be generalized over $M_1$ will have at most $\epsilon$ error at $s_1$. Noting that the transition radius determines the closeness of the value function over the sub-MDP to $V_*(s)$, the largest sub-MDP that can be constructed within $\mathcal{S}_1$ and centered on a state $s \in \mathcal{S}_1$ will have a transition radius of $T - \mathcal{D}(s_1, s)$. 
    
    Let $\epsilon_1 \in (0, 1)$ represent the error of the optimal value function $V_{*1}(s)$ over $M_1$ at a state $s \in \mathcal{S}_1$ when compared to the optimal value function $V_*(s)$ over $M$. Extending Lemma \ref{l1}, the following holds.

    \begin{align*}
        \log_\gamma (\epsilon(1-\gamma)) - \mathcal{D}(s_1, s) &\geq \log_\gamma (\epsilon_1(1-\gamma)) \\
        \frac{\epsilon(1-\gamma)}{\gamma ^ {\mathcal{D}(s_1, s)}} &\geq \epsilon_1(1-\gamma)
    \end{align*}
\end{proof}

With Lemma \ref{l2}, we establish the relation between the optimal value function over $M$, $V_*(s)$, and the optimal value function over its sub-MDP $M_1$, $V_{*1}(s)$, in terms of $\epsilon$ with respect to the state-space. Noting Remark \ref{rem1} and the size of each sub-MDP, we posit the sample-complexity of learning for each sub-MDP in Lemma \ref{l3}.

\begin{lemma} \label{l3}
    Let sub-MDP $M_1$ be constructed from MDP $M$, centered on state $s_1 \in \mathcal{S}$, as in Definition \ref{d3}. The sample-complexity of learning an $\epsilon$-optimal value function, $V_{t1}(s)$, 
    with high confidence within $M_1$ is asymptotically bounded by $\tilde{O}( A \log_\gamma ^{A} (\epsilon(1-\gamma)) \log (  A \log_\gamma ^{A} (\epsilon(1-\gamma))  ))$. 
\end{lemma}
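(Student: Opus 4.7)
The plan is to treat sub-MDP $M_1$ as a self-contained finite MDP in its own right and simply invoke an off-the-shelf SOTA model-free PAC-MDP bound on it. Since the best known such bound is of the form $\tilde{O}(S A \log(S A))$ — attained for instance by Delayed Q-Learning or Variance Reduced Q-Learning — substituting the sub-MDP's state-space size from Remark~\ref{rem1} should yield the claimed expression almost mechanically.

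First, I would check that $M_1$ satisfies the structural hypotheses of the reference PAC-MDP results. By Definition~\ref{d3}, $M_1 = (\mathcal{S}_1, \mathcal{A}, \mathcal{T}, \mathcal{R}, \gamma)$ inherits the action space of size $A$, the rewards bounded in $[0,1]$, and the discount factor from $M$. The only non-trivial point is that transitions from $\mathcal{S}_1$ to $\mathcal{S} \setminus \mathcal{S}_1$ are redirected into self-loops; I would remark that this preserves stochasticity of the transition kernel on $\mathcal{S}_1$ and therefore does not break any prerequisite of the cited bounds.

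Next, I would apply Remark~\ref{rem1} to conclude $|\mathcal{S}_1| \leq \lceil \log_\gamma^A (\epsilon(1-\gamma)) \rceil$ and plug this state count, together with action count $A$, into the $\tilde{O}(SA\log(SA))$ template. This directly produces
\begin{equation*}
\tilde{O}\bigl( A \log_\gamma^A (\epsilon(1-\gamma)) \; \log\bigl( A \log_\gamma^A (\epsilon(1-\gamma)) \bigr) \bigr),
\end{equation*}
with the $\tilde{O}$ absorbing the polynomial dependence on $1/\epsilon$, $1/\delta$, and $1/(1-\gamma)$ carried by the underlying algorithm.

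The main conceptual obstacle, as opposed to a computational one, is clarifying the meaning of ``$\epsilon$-optimal'' here: the invoked PAC-MDP bound guarantees closeness of the learnt $V_{t1}$ to the sub-MDP optimum $V_{*1}$, not to the global optimum $V_*$. I would state explicitly that Lemma~\ref{l2} already handles the gap between $V_{*1}$ and $V_*$, so Lemma~\ref{l3} is purely a sample-count statement internal to $M_1$. A secondary subtlety, worth a one-line remark, is that the self-loop redirection could in principle bias the value estimates relative to $M$, but since we only need $\epsilon$-optimality with respect to $V_{*1}$ (the sub-MDP's own optimum), this bias is irrelevant at this stage of the argument.
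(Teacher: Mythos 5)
Your proposal matches the paper's own proof essentially verbatim: the paper likewise treats $M_1$ as a standalone MDP, invokes the Variance Reduced Q-Learning bound $O\left( \frac{SA}{\epsilon^2 (1-\gamma)^3} \log(\frac{SA}{\delta(1-\gamma)}) \log(\frac{1}{\epsilon}) \right)$ from \cite{27}, substitutes the sub-MDP state-space size $\log_\gamma^{A}(\epsilon(1-\gamma))$ from Remark~\ref{rem1}, and absorbs the remaining factors into $\tilde{O}$. Your added remarks on the self-loop redirection and on $\epsilon$-optimality being measured against $V_{*1}$ rather than $V_*$ are sensible clarifications the paper leaves implicit, but they do not change the argument.
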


\begin{proof}
    Let $M$ be a discrete MDP with a state-space size of $S$ and an action-space size of $A$. Further, let $\delta \in (0, 1)$ be the confidence parameter and $\gamma \in (0, 1)$ be the discount factor. The sample-complexity bound, proposed in \cite{27}, for model-free reinforcement learning over $M$, without Propositions \ref{ass1} and \ref{ass2}, is $O \left( \frac{SA}{\epsilon^2 (1-\gamma)^3}  \log(\frac{SA}{\delta(1-\gamma)}) \log(\frac{1}{\epsilon})  \right)$. Noting that the state-space size of a sub-MDP within $M$ is $\log_\gamma ^{A} (\epsilon(1-\gamma)) $, extending the previous sample-complexity bound to the sub-MDP results in a bound of 

    \tiny
    \begin{equation*}
        O \left( \frac{A \log_\gamma ^{A} (\epsilon(1-\gamma))}{\epsilon^2 (1-\gamma)^3}  \log \left( \frac{A \log_\gamma ^{A} (\epsilon(1-\gamma))}{\delta(1-\gamma)} \right) \log \left( \frac{1}{\epsilon} \right)  \right)
    \end{equation*}
    \normalsize
    
    \noindent which is simplified to the expression in the lemma statement.
\end{proof}

Lemma \ref{l3} provides a sample-complexity bound for learning $\epsilon$-optimal policies within sub-MDPs. Let $M_1$ be a sub-MDP of $M$, parametrized by central state $s_1 \in \mathcal{S}$ and error $\epsilon \in (0, 1)$, with a generalized $\epsilon$-optimal value function $V_{t1}(s)$. Noting Lemma \ref{l1}, i.e. $| V_{*1}(s_1) - V_*(s_1) | \leq \epsilon$, we get $| V_{t1}(s_1) - V_*(s_1) | \leq 2 \epsilon$. Lemma \ref{l3} further implies that, assuming a naive division of $M$ into $S$ sub-MDPs by constructing a sub-MDP at each state, it is possible to generalize a $2\epsilon$-optimal policy over $M$ within a fixed number of timesteps. This is formally stated in the following remark.

\begin{remark} \textbf{(Naive Sample-Complexity Bound)} \label{rem2}
    Given the size of $\mathcal{S}$ is $S$, assuming a sub-MDP is constructed centered on each state within $\mathcal{S}$, the sample-complexity of learning a $2\epsilon$-optimal function over $M$ is naively bounded by $\tilde{O} ( SA \log_\gamma ^{A} (\epsilon(1-\gamma))  \log (  A \log_\gamma ^{A} (\epsilon(1-\gamma))   ))$.
\end{remark}

\section{Sharper Asymptotic Bound on Sample-Complexity}

In this section, we derive a sample-complexity bound for reinforcement learning using local approximations that is sharper than the naive bound proposed in Remark \ref{rem2}. Noting the bound on sample-complexity presented in Lemma \ref{l3}, intuitively, the aim of this section is to determine the minimum number of sub-MDPs of $M$ that are required to generalize a near-optimal value function over $M$. 

Through Lemma \ref{l2}, we note that there is a correlation between the approximation error of the sub-MDP's optimal value function at a state to $V_*(s)$ and the distance from the state to the center of the sub-MDP. Subsequently, we note that multiple sub-MDPs within $M$ may overlap in state-space, and thus, \textit{we may obtain multiple sub-optimal estimates of the value function for states that are not sub-MDP centers}. Using Lemma \ref{l2}, though, we are aware of the degree of sub-optimality and can make use of multiple estimates of the value function at any state in $M$ to obtain a better approximation. In deriving sharper sample-complexity bounds, we utilize the following lemma.

\begin{lemma} \label{l4}
    Let $s_0 \in \mathcal{S}$ be a state in MDP $M$. Let $V_*(s_0)$ be the theoretical optimal value of $s_0$ with respect to $M$. Let $V_{*1}(s_0), V_{*2}(s_0) ... V_{*N}(s_0)$ be $\epsilon_1, \epsilon_2 ... \epsilon_N \in (0, 1)$ approximations of $V_*(s_0)$, respectively. Then, $\mathbb{P} \left( | \frac{1}{N}\sum_{i=1}^N V_{*i}(s_0) - V_*(s_0) | \geq \epsilon \right) \leq 2e^{ \frac{-2N^2\epsilon^2}{\sum_{i=i}^N 2\epsilon_i}  }$.
\end{lemma}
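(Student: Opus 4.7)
The plan is to recognize Lemma~\ref{l4} as a Hoeffding-style concentration statement and to prove it by applying Hoeffding's inequality to the approximations $V_{*1}(s_0), \ldots, V_{*N}(s_0)$, viewed as independent random variables. Independence is the natural modeling assumption here, since each $V_{*i}$ is obtained by training on a separate sub-MDP whose sampling process does not interact with the others.

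First, I would interpret the hypothesis that ``$V_{*i}(s_0)$ is an $\epsilon_i$-approximation of $V_*(s_0)$'' as the almost-sure inclusion
\begin{equation*}
V_{*i}(s_0) \in [\, V_*(s_0) - \epsilon_i,\; V_*(s_0) + \epsilon_i \,],
\end{equation*}
which furnishes each summand with a bounded support of length $2\epsilon_i$ — the exact ingredient required by Hoeffding's inequality. Second, I would argue that $\mathbb{E}[V_{*i}(s_0)] = V_*(s_0)$, either by appealing to the unbiasedness of the underlying sub-MDP value estimates or by re-centering and absorbing the residual bias into a slight shrinkage of $\epsilon$. Third, applying Hoeffding's inequality to $\frac{1}{N}\sum_{i=1}^N V_{*i}(s_0)$ with deviation $\epsilon$ from its mean directly produces an exponential tail bound of the form claimed, and a brief simplification of the exponent yields the displayed expression.

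The main obstacle I anticipate is the centering step. A deterministic envelope of the form $|V_{*i}(s_0) - V_*(s_0)| \leq \epsilon_i$ does not by itself force $\mathbb{E}[V_{*i}(s_0)] = V_*(s_0)$, so either an unbiasedness assumption must be in force (plausible given the structure of sub-MDP value iteration, and consistent with the spirit of Lemma~\ref{l2}) or the deviation parameter $\epsilon$ on the left-hand side must be replaced by $\epsilon - \frac{1}{N}\sum_{i=1}^N \epsilon_i$ to accommodate the bias, after which the tail bound applies to the re-centered quantity. A secondary concern is an apparent typographical discrepancy: the standard Hoeffding derivation delivers the denominator $\sum_{i=1}^N (2\epsilon_i)^2$ inside the exponent rather than the $\sum_{i=1}^N 2\epsilon_i$ displayed in the statement; I would follow the direct Hoeffding calculation, which yields the (tighter) corrected expression and subsumes the stated bound in the regime $\epsilon_i \in (0,1)$ relevant to the rest of the paper.
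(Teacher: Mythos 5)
Your approach coincides with the paper's: the paper's entire proof of Lemma~\ref{l4} is a one-line sketch reading ``a direct application of Hoeffding's Inequality,'' so you have reconstructed exactly the intended argument and, in fact, gone further by making its hidden requirements explicit. The two discrepancies you flag are genuine defects of the lemma as stated rather than of your proof. First, Hoeffding's inequality for independent $X_i$ with support of length $2\epsilon_i$ yields a denominator $\sum_{i=1}^N (2\epsilon_i)^2$ in the exponent, not $\sum_{i=1}^N 2\epsilon_i$; one small caveat is that your claim that the corrected bound subsumes the stated one for all $\epsilon_i \in (0,1)$ only holds when $\epsilon_i \leq 1/2$, since $(2\epsilon_i)^2 \leq 2\epsilon_i$ fails beyond that point (and the downstream substitution $\epsilon_i = 2\epsilon$ in Lemma~\ref{l5} can exceed it). Second, your concern about centering is, if anything, understated: in the paper's own construction the $V_{*i}(s_0)$ are optimal value functions of \emph{fixed} sub-MDPs, hence deterministic quantities, so the hypothesis $|V_{*i}(s_0)-V_*(s_0)|\leq\epsilon_i$ already gives $\bigl|\frac{1}{N}\sum_i V_{*i}(s_0)-V_*(s_0)\bigr|\leq\frac{1}{N}\sum_i\epsilon_i$ deterministically, and a nontrivial probabilistic concentration statement requires exactly the additional independence-plus-unbiasedness modeling assumptions you identify, which the lemma does not supply. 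Your reconstruction is therefore faithful to the paper's intent and more rigorous than its sketch.
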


\begin{proof}
    \textit{Sketch.} This result is achieved through a direct application of Hoeffding's Inequality \cite{hoeff}.
\end{proof}

We use Lemma \ref{l4} in determining the minimum overlap required by sub-MDPs in order to generalize an $\epsilon$-optimal value function over $M$ with high confidence. This allows us to determine the minimum number of sub-MDPs required. Let $M_1, M_2 ... M_N$ represent sub-MDPs of $M$ that are constructed using arbitrary centers and error parameter $\epsilon \in (0, 1)$ over state-spaces that are subsets of $\mathcal{S}$, i.e. $\mathcal{S}_1, \mathcal{S}_2 ... \mathcal{S}_N \subset \mathcal{S}$, respectively. Let $V_{*1}(s), V_{*2}(s) ... V_{*N}(s)$ be the optimal value functions over the respective sub-MDPs. Let $s_0 \in \mathcal{S}_1 \cap \mathcal{S}_2 \cap ... \cap \mathcal{S}_N$ be a common state within all sub-MDPs and $\epsilon_1, \epsilon_2 ... \epsilon_N$ be the respective errors between $V_*(s_0)$ and $V_{*1}(s_0), V_{*2}(s_0) ... V_{*N}(s_0)$. A direct application of Lemma \ref{l4} allows us to determine the confidence with which the value function generalized through local approximation is $\epsilon$-optimal. Within Lemma \ref{l5}, we derive the amount of overlap in state-spaces required by sub-MDPs to achieve the $\epsilon$-optimal result with a confidence given by $\delta \in (0,1)$.

\begin{lemma} \label{l5}
    Let $\epsilon, \delta \in (0, 1)$ represent the error and confidence parameters. For all states $s \in \mathcal{S}$, if there are at least $\frac{2}{\epsilon} \log \frac{2S}{\delta}$ sub-MDP centers within $\lceil \log_\gamma 0.5 \rceil$ transitions of $s$, an $\epsilon$-optimal value function over $M$ can be generalized with $1-\delta$ confidence using the sub-MDP value function estimates.
\end{lemma}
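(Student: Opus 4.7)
The plan is to turn the assumption about sub-MDP coverage into a per-state application of Lemma~\ref{l4}, then take a union bound over $\mathcal{S}$. Fix a state $s \in \mathcal{S}$ and list the sub-MDPs whose centers $s_{i_1}, \ldots, s_{i_N}$ lie within $\lceil \log_\gamma 0.5 \rceil$ transitions of $s$; by hypothesis $N \geq \tfrac{2}{\epsilon}\log\tfrac{2S}{\delta}$. First I would observe that each such sub-MDP contains $s$ in its state-space, since the sub-MDP radius $\lceil \log_\gamma(\epsilon(1-\gamma)) \rceil$ is at least $\lceil \log_\gamma 0.5 \rceil$ in the regime of interest (where $\epsilon(1-\gamma) \leq 0.5$), so $s \in \mathcal{S}_{i_j}$ and the value estimate $V_{*i_j}(s)$ is well-defined for every $j$.

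The second step is to bound the individual approximation errors via Lemma~\ref{l2}. For each $j$, with $d_j := \mathcal{D}(s_{i_j}, s) \leq \lceil \log_\gamma 0.5 \rceil$, Lemma~\ref{l2} gives
\[
    |V_{*i_j}(s) - V_*(s)| \leq \epsilon / \gamma^{d_j} \leq 2\epsilon,
\]
since $\gamma^{d_j} \geq \gamma^{\lceil \log_\gamma 0.5 \rceil} \approx 0.5$. So each of the $N$ estimates concentrates around $V_*(s)$ with half-width $\epsilon_j \leq 2\epsilon$.

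Next, I would feed these $N$ estimates into Lemma~\ref{l4} to control the tail of their average. Using $\sum_{j=1}^N 2\epsilon_j \leq 4N\epsilon$, the lemma yields
\[
    \mathbb{P}\!\left( \left| \tfrac{1}{N}\sum_{j=1}^N V_{*i_j}(s) - V_*(s) \right| \geq \epsilon \right) \leq 2\exp\!\left( \frac{-2N^2\epsilon^2}{4N\epsilon} \right) = 2\exp(-N\epsilon/2).
\]
Forcing this to be at most $\delta/S$ rearranges exactly to $N \geq \tfrac{2}{\epsilon}\log\tfrac{2S}{\delta}$, which matches the hypothesis. A union bound over the $S$ states of $\mathcal{S}$ then delivers $\epsilon$-optimality at every state simultaneously with confidence at least $1-\delta$.

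The main obstacle I anticipate is bookkeeping rather than a deep technical point: one has to verify that the $\lceil \log_\gamma 0.5 \rceil$ radius is simultaneously small enough that $|V_{*i_j}(s) - V_*(s)| = O(\epsilon)$ for every $j$ (so the widths plugged into Lemma~\ref{l4} do not blow up) and large enough that enough centers can plausibly fit within it to meet the $N \geq \tfrac{2}{\epsilon}\log\tfrac{2S}{\delta}$ requirement. The threshold $\lceil \log_\gamma 0.5 \rceil$ is precisely the calibration at which the individual widths cap at roughly $2\epsilon$, making the Hoeffding exponent in Lemma~\ref{l4} align cleanly with the stated bound on $N$; the small discrepancy introduced by the ceiling can be absorbed into the constant without affecting the asymptotic statement.
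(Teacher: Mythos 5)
Your proposal is correct and follows essentially the same route as the paper: bound each sub-MDP estimate's error at $s$ by $2\epsilon$ using the $\lceil \log_\gamma 0.5 \rceil$ radius, plug $\epsilon_i = 2\epsilon$ into Lemma~\ref{l4} to get a tail of $2\exp(-N\epsilon/2)$, set that below $\delta/S$, and union-bound over the $S$ states. If anything your write-up is slightly more careful than the paper's — you invoke Lemma~\ref{l2} explicitly to justify the $2\epsilon$ width and you compute the Hoeffding exponent correctly as $-N\epsilon/2$, whereas the paper's displayed intermediate step shows $-N^2\epsilon/2$ (an apparent typo, since its final bound on $N$ matches your computation, not its own exponent).
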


\begin{proof}
    Let $V_*(s)$ be the optimal value function over $M$. Let $M_1, M_2 ... M_N$ be sub-MDPs of $M$ parametrized by centers $s_1, s_2 ... s_N \in \mathcal{S}$ and approximation error $\epsilon \in (0, 1)$. Let $V_{*1}(s), V_{*2}(s) ... V_{*N}(s)$ represent the optimal value functions over the respective sub-MDPs. For a state $s_0 \in \mathcal{S}$, if $| V_{*i}(s_0) - V_*(s_0) | \leq 2\epsilon$, then $\mathcal{D}(s_0, s_i) \leq \log_\gamma 0.5$ by Lemma \ref{l1} for all $i \in \{1, 2 ... N\}$, as the transition radius from $s_0$ to the boundary of $M_i$ permits the learning of a $2\epsilon$-optimal value function.

    Noting that the size of $\mathcal{S}$ is $S$, we use Lemma \ref{l4} to bound the probability that the difference between $\frac{1}{N} \sum_{i=1}^N V_{*i}(s_0)$ and $V_*(s_0)$ is greater than $\epsilon$ in the following expression by substituting $2\epsilon$ for all $\epsilon_i$.

    \begin{equation*}
        \mathbb{P} \left( \left| \frac{1}{N}\sum_{i=1}^N V_{*i}(s_0) - V_*(s_0) \right| \geq \epsilon  \right) \leq 2e^{ \frac{-N^2\epsilon}{2} } \leq \frac{\delta}{S}
    \end{equation*}

    \noindent Resolving this, we obtain the following result.

    \begin{equation*}
        N \geq \frac{2}{\epsilon} \log \frac{2S}{\delta}
    \end{equation*}
\end{proof}

The implication of Lemma \ref{l5} is that the number of sub-MDP overlaps for any state scales logarithmically with the state-space size. Furthermore, for a fixed value of $\epsilon$, the confidence of the generalized value function over $M$ increases with the sub-MDP overlap within the state-space. However, we still lack a correlation between the size of the state-action space and the number of sub-MDPs required. Intuitively, the number of sub-MDPs required increases with the size of the state-space as there are more states to be covered. However, for a fixed state-space size, an increase in the action-space size increases the connectivity of MDP $M$, thereby decreasing the transition distance between states. Consequently, an increase in the action-space increases the state-space size of a sub-MDP for the same transition radius and decreases the overall number of sub-MDPs required within $M$ to provide a value function estimate. This relation is formalized in Lemma \ref{l6}.

\begin{lemma} \label{l6}
    Let $\epsilon \in (0, 1)$ represent the construction error for sub-MDPs $M_1, M_2 ... M_L$ constructed in $M$. Let $\mathcal{S}_1, \mathcal{S}_2 ... \mathcal{S}_L$ be the respective state-spaces of the sub-MDPs. Further, let $\mathcal{S}_1 \cup \mathcal{S}_2 \cup ... \cup \mathcal{S}_L = \mathcal{S}$. The value of $L$ is lower-bounded by $\frac{2S}{\epsilon \log_\gamma ^ {A} \epsilon(1-\gamma)} \log \frac{2}{\delta}$.
\end{lemma}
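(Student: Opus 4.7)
The plan is to obtain the lower bound through a double-counting argument that pairs Lemma \ref{l5} (which enforces a per-state overlap requirement on the sub-MDPs) with Remark \ref{rem1} (which caps the state-space size of any single sub-MDP). The key observation is that Lemma \ref{l5}, which is what makes the hypothesized union-cover $\mathcal{S}_1 \cup \cdots \cup \mathcal{S}_L = \mathcal{S}$ useful for generating a $(1-\delta)$-confident $\epsilon$-optimal value function, implicitly forces every state to appear in many of the $\mathcal{S}_i$. Summing $|\mathcal{S}_i|$ across $i$ then gives a global incidence count that is simultaneously bounded below by $S \cdot N$ (from the per-state requirement) and above by $L \cdot K$ (from the per-sub-MDP cap), and rearranging solves for $L$.

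More concretely, I would first translate Lemma \ref{l5} into a containment statement. By Definition \ref{d3} every sub-MDP has transition radius $T = \lceil \log_\gamma(\epsilon(1-\gamma)) \rceil$, which exceeds $\lceil \log_\gamma 0.5 \rceil$ in the regime $\epsilon(1-\gamma) \leq 0.5$ because $\log_\gamma$ is decreasing on $(0,1)$; combining this with Proposition \ref{ass2}, any center $s_i$ with $\mathcal{D}(s, s_i) \leq \lceil \log_\gamma 0.5 \rceil$ yields a sub-MDP whose state-space contains $s$. So the hypothesis of Lemma \ref{l5} --- at least $N := \frac{2}{\epsilon} \log \frac{2S}{\delta}$ nearby centers for each $s$ --- forces each state to lie in at least $N$ of the sets $\mathcal{S}_1, \dots, \mathcal{S}_L$. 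The double count then reads
$$ S \cdot N \;\leq\; \sum_{s \in \mathcal{S}} \bigl|\{ i : s \in \mathcal{S}_i \}\bigr| \;=\; \sum_{i=1}^{L} |\mathcal{S}_i| \;\leq\; L \cdot K, $$
where $K := \lceil \log_\gamma^{A}(\epsilon(1-\gamma)) \rceil$ is the size bound from Remark \ref{rem1}. Solving for $L$ gives $L \geq SN/K$, which reproduces the expression in the lemma statement (with the minor point that the natural $\log(2S/\delta)$ factor coincides, up to absorbing $\log S$ into the leading $S$ coefficient, with the displayed $\log(2/\delta)$).

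The principal obstacle I expect is the translation step between Lemma \ref{l5}'s ``center within $\lceil \log_\gamma 0.5 \rceil$ transitions'' hypothesis and genuine set-containment $s \in \mathcal{S}_i$. This requires a careful appeal to Proposition \ref{ass2} to argue that the breadth-wise expansion from a center out to radius $T$ captures every state within metric distance $T$, so that being a nearby center in the Lemma \ref{l5} sense is actually equivalent to the center's sub-MDP containing $s$. A secondary, milder issue is the regime assumption $\epsilon(1-\gamma) \leq 0.5$ needed to guarantee $T \geq \lceil \log_\gamma 0.5 \rceil$; outside that regime the stated bound is essentially vacuous, so the restriction is benign and can be noted without elaborate treatment.
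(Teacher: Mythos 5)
Your proposal takes essentially the same route as the paper's proof: both are the double-counting argument $L\cdot K \geq (\text{total incidences}) \geq S\cdot N$, with $N=\frac{2}{\epsilon}\log\frac{2S}{\delta}$ supplied by Lemma \ref{l5} and $K=\lceil \log_\gamma^{A}(\epsilon(1-\gamma))\rceil$ by Remark \ref{rem1}, followed by solving for $L$. The one point of divergence is the final step, where the $\log S$ hiding inside $N$ must be reconciled with the $\log\frac{2}{\delta}$ in the lemma statement: the paper subtracts $L$ from the right-hand side (crediting the $L$ centers with not needing repeat estimates) and then relaxes using $\log S \leq L$, whereas you simply note that $\log\frac{2S}{\delta}\geq\log\frac{2}{\delta}$, so the stated lower bound follows a fortiori. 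Your reading is the more defensible one for the lemma as literally stated, but your parenthetical about absorbing $\log S$ into the leading $S$ coefficient is not right asymptotically ($S\log S \neq O(S)$), and the distinction is not cosmetic: Theorem \ref{t1} multiplies $L$ by the per-sub-MDP cost, and the paper's headline removal of the $\log S$ factor depends on $L$ being only $\Theta\bigl(\frac{S}{\epsilon K}\log\frac{2}{\delta}\bigr)$ rather than carrying the extra $\log S$ that your (and, arguably, the paper's own) count actually produces. Separately, your care in converting Lemma \ref{l5}'s ``center within $\lceil\log_\gamma 0.5\rceil$ transitions'' hypothesis into genuine membership $s\in\mathcal{S}_i$ addresses a gap the paper's proof passes over entirely.
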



\begin{proof}
    As a consequence of Propositions \ref{ass1} and \ref{ass2}, the coverage of states in $M$ by the sub-MDPs is maximized when the sub-MDP centers are distributed uniformly over $\mathcal{S}$. Letting the action-space size of $M$ and, subsequently, $M_1, M_2 ... M_L$, be $A$, noting Remark \ref{rem1}, the state-space size of each individual sub-MDP is upper bounded by $\lceil \log_\gamma ^ {A} \epsilon(1-\gamma) \rceil$. Noting that the overall number of value estimates made is $L$ multiplied by the sub-MDP size, we establish the following bound relating the number of estimates with $N$, the number of per-state estimates.

    \begin{equation*}
        L \log_\gamma ^ {A} \epsilon(1-\gamma) > S N
    \end{equation*}

    \noindent Equivalently,

    \begin{equation*}
        L \log_\gamma ^ {A} \epsilon(1-\gamma) > S \frac{2}{\epsilon} \log \frac{2S}{\delta}
    \end{equation*}

    \noindent Noting that there are $L$ sub-MDP centers, i.e. states that do not require repeat estimation via overlap, the following adjustment to the bound is made.

    \begin{equation*}
        L \log_\gamma ^ {A} \epsilon(1-\gamma) > S \frac{2}{\epsilon} \log \frac{2S}{\delta} - L
    \end{equation*}

    \noindent We relax the inequality by letting $\log S$ be a lower-bound on $L$ in the RHS. Solving for $L$ yields the bound in the lemma statement.   
\end{proof}

Using Lemma \ref{l5}, which indicates the number of times a state's value needs to be independently determined within $M$, we establish Lemma \ref{l6}, which indicates the number of sub-MDPs required in an MDP with state-space size $S$ and action-space size $A$ to achieve this coverage. We further posit that the overall sample-complexity required to learn an $2\epsilon$-optimal value function over $M$ with $1-\delta$ confidence is the sub-MDP sample-complexity, established in Lemma \ref{l3}, multiplied by  $\frac{2S}{\epsilon \log_\gamma ^ {A} \epsilon(1-\gamma)} \log \frac{2}{\delta}$. This result is intuitive as the number of sub-MDPs multiplied by the sample-complexity required for each sub-MDP yields the overall sample-complexity. We formalize this in the following theorem.



\begin{theorem} \textbf{(Main Result)} \label{t1}
    Let $\epsilon, \delta, \gamma \in (0, 1)$ represent error, confidence and discount factor. For an MDP $M$ with state-space size $S$ and action-space size $A$, by locally approximating $M$ using sub-MDPs, an $\epsilon$-optimal value function will be generalized within $O \left( \frac{SA}{\epsilon^3 (1-\gamma)^3} \log \left( \frac{A}{\delta(1-\gamma)} \right)  \log \frac{1}{\epsilon} \log \frac{1}{\delta})  \right)$ timesteps with $1-\delta$ confidence.
\end{theorem}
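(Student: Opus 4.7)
The plan is to multiply the per-sub-MDP sample-complexity bound from Lemma~\ref{l3} by the sub-MDP count bound from Lemma~\ref{l6} and then simplify the resulting logarithmic factors. The underlying intuition is that a PAC-MDP learner processes each of the $L$ overlapping sub-MDPs up to the guarantee of Lemma~\ref{l5}, so the total number of environment interactions is bounded by the product of $L$ and the per-sub-MDP sample-complexity.

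First, I would substitute the unsimplified expression from within the proof of Lemma~\ref{l3}, namely $O\!\left( \frac{A \log_\gamma^{A}(\epsilon(1-\gamma))}{\epsilon^2(1-\gamma)^3} \log\!\left(\frac{A\log_\gamma^{A}(\epsilon(1-\gamma))}{\delta(1-\gamma)}\right) \log\!\left(\frac{1}{\epsilon}\right) \right)$, and multiply it by the sub-MDP count from Lemma~\ref{l6}, $\frac{2S}{\epsilon \log_\gamma^{A}(\epsilon(1-\gamma))} \log(2/\delta)$. The $\log_\gamma^{A}(\epsilon(1-\gamma))$ factor cancels cleanly between numerator and denominator, producing a leading term of $\frac{SA}{\epsilon^3(1-\gamma)^3}$ multiplied by three logarithmic factors: $\log(A\log_\gamma^{A}(\epsilon(1-\gamma))/(\delta(1-\gamma)))$, $\log(1/\epsilon)$, and $\log(1/\delta)$.

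Next, I would collapse the first of these logarithms. Since $\log_\gamma^{A}(\epsilon(1-\gamma)) = A^{\log_\gamma(\epsilon(1-\gamma))}$, taking its logarithm yields $\log A \cdot \log_\gamma(\epsilon(1-\gamma))$, which is polylogarithmic in $A$, $1/\epsilon$, and $1/(1-\gamma)$. Treating the resulting polylog correction as already subsumed, up to absolute constants, by the companion $\log(1/\epsilon)$ factor and the outer $\log(A/(\delta(1-\gamma)))$ term collapses the expression into the stated form $O\!\left( \frac{SA}{\epsilon^3(1-\gamma)^3} \log(A/(\delta(1-\gamma))) \log(1/\epsilon) \log(1/\delta) \right)$.

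The main obstacle is bookkeeping the total failure probability and justifying the log absorption rigorously. Each of the $L$ sub-MDPs must satisfy its own PAC guarantee, and Lemma~\ref{l5} itself consumes probability via a Hoeffding bound; a union bound requires substituting $\delta/L$ inside Lemma~\ref{l3}, which inflates the inner log by an additive $\log L = O(\log(S/\epsilon))$. The crucial step is to show that this extra $\log S$ contribution, as well as the $\log A \cdot \log_\gamma(\epsilon(1-\gamma))$ term above, remains subsumed by the retained logarithmic factors, mirroring the relaxation used at the end of Lemma~\ref{l6} where $\log S$ is treated as comparable to $L$ itself. Formalising this absorption is precisely where the $\log S$ dependency of prior work is eliminated, and doing so cleanly is the most delicate portion of the argument.
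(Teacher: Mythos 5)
Your proposal takes essentially the same route as the paper: the paper's proof is exactly the product of the sub-MDP count from Lemma~\ref{l6} with the per-sub-MDP sample-complexity from Lemma~\ref{l3}, followed by cancellation of the $\log_\gamma^{A}(\epsilon(1-\gamma))$ factor and absorption of the residual logarithms (plus a one-line remark that passing from $2\epsilon$- to $\epsilon$-optimality only changes constants). Your closing paragraph on union-bounding the failure probability across the $L$ sub-MDPs is a genuine point of care that the paper's own proof does not address, but it does not change the structure of the argument.
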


\begin{proof}
    The proof of this theorem follows from Lemmas \ref{l3} and \ref{l6}. We determine the sample-complexity of learning a $2\epsilon$-optimal value function as an asymptotic bound in $O(LC)$, where $L$ is the number of sub-MDPs required to be constructed and $C$ is the sample-complexity of learning for each sub-MDP. We have determined the number of sub-MDPs required to approximate the larger MDP $M$ to be bounded by $\frac{2S}{\epsilon \log_\gamma ^ {A} \epsilon(1-\gamma)} \log \frac{2}{\delta}$. By Lemma \ref{l3}, we have determined the sample-complexity of each sub-MDP to be $O \left( \frac{A \log_\gamma ^{A} (\epsilon(1-\gamma))}{\epsilon^2 (1-\gamma)^3}  \log \left( \frac{A \log_\gamma ^{A} (\epsilon(1-\gamma))}{\delta(1-\gamma)} \right) \log \left( \frac{1}{\epsilon} \right)  \right)$. The multiplication of the two terms results in the asymptotic bound presented in the theorem statement.

    Note that the sample-complexity required to generalize an $\epsilon$-optimal value function, as opposed to it being $2\epsilon$-optimal, requires substituting $\epsilon/2$ into the lemma statement. This introduces a constant factor that is disregarded when taking the asymptotic bound.
\end{proof}

Theorem \ref{t1} presents a theoretical bound on the sample-complexity of learning. We construct an algorithm in Section \ref{3} that makes use of this result to efficiently learn a value function.
\section{The PDQL Algorithm} \label{3}

\begin{algorithm*} [t]
    \caption{Probabilistic Delayed Q-Learning} \label{alg2}

    \vspace{-0.3cm}
    \begin{multicols}{2}
    
    \begin{algorithmic}
        \STATE \textbf{Inputs}:
    \end{algorithmic}
    \begin{algorithmic} [1]
        \STATE State-Space: $\mathcal{S}$
        \STATE Action-Space: $\mathcal{A}$
        \STATE Transition Function: $\mathcal{T}$
        \STATE Reward Function: $\mathcal{R}$
    \end{algorithmic}

    \begin{algorithmic}
        \STATE \textbf{Parameters}: 
    \end{algorithmic}
    \begin{algorithmic} [1]
        \STATE Discount Factor: $\gamma$
        \STATE Sampling Parametre: $q$
        \STATE Error Bound: $\epsilon$
        \STATE Transition Radius: $T$
    \end{algorithmic}

    \begin{algorithmic}
        \STATE \textbf{Initialize Values}:
    \end{algorithmic}
    \begin{algorithmic}[1] 
        \FOR {all values $(s, a) \in \mathcal{S} \times \mathcal{A}$}
        \STATE $Q(s, a) \leftarrow \frac{1}{1-\gamma}$ \;\;\;\;\;\;\;\;\;\;\;\;\;\;\;\;\;\;\;\;\;\;\;\;\;\;\;\;\;\; \textit{\%current Q-Value}
        \STATE $U(s, a) \leftarrow 0$ \;\;\;\;\;\;\;\;\;\;\;\;\;\;\;\;\;\;\;\;\;\;\;\;\;\;\;\;\;\;\;\;\;\;\; \textit{\%update attempt}
        \STATE $C(s, a) \leftarrow 0$ \;\;\;\;\;\;\;\;\;\;\;\;\;\;\;\;\;\;\;\;\;\;\;\;\;\;\;\;\;\;\;\;\;\;\; \textit{\%visit counter}
        \STATE $UNLOCK(s, a) \leftarrow True$ \;\;\;\;\;\;\;\;\;\;\;\;\;\; \textit{\%learning lock}

        \ENDFOR
    \end{algorithmic}

    \begin{algorithmic}
        \STATE \textbf{Note}: Let $R$ be the current state reward and $s'$ be the subsequent state after executing $a$ on $s$ under $\mathcal{T}$
    \end{algorithmic}

    \begin{algorithmic}
        \STATE \textbf{Output}:
    \end{algorithmic}
    \begin{algorithmic}[1] 
        \WHILE{there exists $(s, a) \;|\; UNLOCK(s, a)=True$}
        \STATE Select state-action \\$(s, a) \;|\; a=argmax_{a \in \mathcal{A}}Q(s, a)$
        \STATE $U(s, a) \leftarrow U(s, a) + R + \gamma Q(s', a')$

        
        \STATE $C(s, a) \leftarrow C(s, a)+1$,
        
        \IF {$C(s, a) = q$}
        \STATE $C(s, a) \leftarrow 0$,
        
        \IF {$Q(s, a) - U(s, a)/q \geq 2\epsilon$}
        \STATE $Q(s, a) \leftarrow U(s, a)/q + \epsilon$
        
        \FORALL{$s' \;|\;  \mathcal{D}(s, s') < \lceil \log_\gamma (\epsilon(1-\gamma)) \rceil $}
            \FORALL{$a' \in \mathcal{A}$}
            \STATE $UNLOCK(s', a') \leftarrow True$
            \ENDFOR
        \ENDFOR
        
        \ELSE
        \STATE $UNLOCK(s, a) \leftarrow False$
        \ENDIF

        \ENDIF

        \STATE $U(s, a) \leftarrow 0$
        \ENDWHILE
        
    \end{algorithmic}

    \end{multicols}

    \vspace{-0.4cm}
\end{algorithm*}


Within this section, we describe a model-free RL algorithm, Probabilistic Delayed Q-Learning (PDQL), that utilizes local approximation in generalizing a $2\epsilon$-optimal value function. We extend results from existing algorithms, DQL \cite{16} and VRQL \cite{27} using the results in the prior sections. Intuitively, there are two characteristics of PDQL that distinguish it from Q-Learning: the \textit{update condition} and the \textit{lock condition}. PDQL is detailed in Algorithm \ref{alg2}.

\subsection{The Update Condition}


The algorithm begins by initializing all state-action Q-values to $1/(1-\gamma)$, where $\gamma \in (0, 1)$ is the discount factor. For $\epsilon, \delta \in (0, 1)$, the update condition for PDQL is as follows, where $\mathbb{E}_T[Q_{t}(s, a)]$ is the expectation of the $T$-step Q-value computed over the distribution of possible transition walks from $(s, a)$ after $t$ timesteps. Letting $T$ correspond with sub-MDP radius $\lceil \log_\gamma \epsilon(1-\gamma) \rceil$ per Lemma \ref{l1}, if the following inequality is satisfied, the stored Q-value is updated to the new expected Q-value.

\begin{equation} \label{eq1}
    Q_t(s, a) - \mathbb{E}_T[Q_{t}(s, a)] \geq 2\epsilon
\end{equation}

The rationale for this condition follows from \cite{16}. When generalizing an $\epsilon$-optimal policy, to prevent instability, it is sufficient to update a Q-value if the difference between the current value and the expected value is significant enough. In order to estimate the expected Q-value, we sample the Q-value of a state $q$ times and average it when making the update. The update step in line 7 of Algorithm \ref{alg2} is formalized as follows, where $V^T_t(s)$ is the $T$-step value function at timestep $t$ and $s'_i$ is the state transitioned to on the i-th execution of $(s, a)$.

\begin{equation} \label{eq2}
    Q_{t+1}(s, a) := \frac{1}{q} \sum\limits_{i=1}^q \left(  r_i(s,a) + \gamma V^T_{i}(s'_i) \right) + \epsilon,
\end{equation}


Note that we add an $\epsilon$ to the update to ensure that $Q_t(s,a) > Q_*(s,a)$ for all $t \geq 0$. Assuming $q$ is large enough, then with $\delta$ confidence, the stored value of $Q_{t+1}(s, a)$ is within $\epsilon$ of the expected value. The following lemma indicates the lower-bound of $q$ for this to occur.

\begin{lemma} \label{l7}
    For $\epsilon, \delta \in (0, 1)$, if the following bound on $q$ holds, then $Q_t(s,a) - \mathbb{E}_T[Q_t(s,a)] < \epsilon$ holds for all $(s, a) \in \mathcal{S} \times \mathcal{A}$ and all timesteps $t \geq 0$ with $1-\delta$ confidence.

    \begin{equation*}
        q \geq \frac{\log \left(  \frac{2SA}{\epsilon}\log\frac{2}{\delta} \left( \frac{1}{\log_\gamma^{A} \epsilon(1-\gamma) } + \frac{A}{\epsilon(1-\gamma)} \right)  \right) }{2\epsilon^2(1-\gamma)^2}
    \end{equation*}
\end{lemma}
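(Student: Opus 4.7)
The plan is to treat each batched update as a sample-mean estimation problem and apply Hoeffding's inequality, then extend to every state-action pair and every possible update attempt via a union bound. Concretely, the update rule forms
\[
\tfrac{1}{q}\sum_{i=1}^{q} \bigl(r_i(s,a) + \gamma V_i^{T}(s'_i)\bigr),
\]
an empirical average of $q$ i.i.d.\ rollout returns. Because rewards lie in $[0,1]$ and the stored $Q$-values are kept below $1/(1-\gamma)$ throughout the algorithm, each such sample is confined to $[0, 1/(1-\gamma)]$. Hoeffding's inequality therefore gives, for a single update attempt at a fixed $(s,a)$,
\[
\mathbb{P}\!\left(\left|\tfrac{1}{q}\sum_{i=1}^{q}(r_i+\gamma V_i^{T}(s'_i)) - \mathbb{E}_T[Q_t(s,a)]\right| \geq \epsilon\right) \leq 2\exp\!\bigl(-2q\epsilon^{2}(1-\gamma)^{2}\bigr).
\]

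Next I would identify every stochastic event over which a union bound is needed. Two sources contribute. The first is the number of eligible state-action pairs: rather than the full $SA$, Lemma~\ref{l6} restricts attention to the state-actions lying inside the $L = \tfrac{2S}{\epsilon\log_\gamma^{A}\epsilon(1-\gamma)}\log\tfrac{2}{\delta}$ sub-MDPs that cover $\mathcal{S}$, giving the summand $\tfrac{1}{\log_\gamma^{A}\epsilon(1-\gamma)}$ inside the parenthesis of the lemma statement. The second is the number of successful updates per state-action: since $Q(s,a)$ is initialized to $1/(1-\gamma)$, is monotonically non-increasing once unlocked, and each successful update reduces it by at least $\epsilon$, no $(s,a)$ can undergo more than $\tfrac{1}{\epsilon(1-\gamma)}$ successful updates, and the $UNLOCK$ propagation to neighbors (line 12 of Algorithm~\ref{alg2}) multiplies the count by at most $A$, accounting for the summand $\tfrac{A}{\epsilon(1-\gamma)}$.

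Combining, the total number of update attempts $N$ is bounded by $\tfrac{SA}{\epsilon}\log\tfrac{2}{\delta}\bigl(\tfrac{1}{\log_\gamma^{A}\epsilon(1-\gamma)} + \tfrac{A}{\epsilon(1-\gamma)}\bigr)$ up to constants. Requiring the union-bounded failure probability to not exceed $\delta$, i.e.
\[
2N\exp\!\bigl(-2q\epsilon^{2}(1-\gamma)^{2}\bigr) \leq \delta,
\]
and solving for $q$ yields exactly the bound in the lemma statement.

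The main obstacle is the counting step: justifying that the number of update attempts combines into precisely the stated form $\tfrac{1}{\log_\gamma^{A}\epsilon(1-\gamma)} + \tfrac{A}{\epsilon(1-\gamma)}$. In particular, I must be careful that the two contributions are additive rather than multiplicative, which requires arguing that a state-action either (i) first accumulates Hoeffding uncertainty from overlapping sub-MDP coverage or (ii) contributes successive update attempts from $UNLOCK$ propagations, but not both multiplicatively. Once the counting is pinned down, the Hoeffding bound and the algebra to isolate $q$ are routine.
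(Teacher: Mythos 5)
Your proposal follows the same route as the paper's proof: a Hoeffding bound of $\exp(-2q\epsilon^2(1-\gamma)^2)$ on each batched estimate (samples confined to $[0,1/(1-\gamma)]$), combined with a union bound over the total number of attempted updates, counted as $AL\bigl(1+\tfrac{A\log_\gamma^{A}\epsilon(1-\gamma)}{\epsilon(1-\gamma)}\bigr)$ with $L$ taken from Lemma~\ref{l6}, which expands to exactly the additive form you were worried about. The counting concern you flag resolves the way you suspected — the $AL$ "locked-after-failed-attempt" events and the $AL\cdot\tfrac{A\log_\gamma^{A}\epsilon(1-\gamma)}{\epsilon(1-\gamma)}$ successful-update events are added, and dividing through by $\log_\gamma^{A}\epsilon(1-\gamma)$ after substituting $L$ produces the two summands in the statement — so your argument matches the paper's in substance.
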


\begin{proof}
    Note that each state-action's Q-value can be updated at most $\frac{1}{\epsilon(1-\gamma)}$ times due to the update condition requiring that there is a difference of $\epsilon$ between each update and the initialization of each Q-value to $\frac{1}{1-\gamma}$. As such, there can be at most $\frac{A\log_\gamma^{A} \epsilon(1-\gamma)}{\epsilon(1-\gamma)}$ successful Q-value updates made within a sub-MDP.
    
    The remainder of this proof utilizes Subsection \ref{locksec}. Further noting the locking condition and Lemma \ref{unlockrad}, in the worst case there can be at most $AL \left( 1+\frac{A\log_\gamma^{A} \epsilon(1-\gamma)}{\epsilon(1-\gamma)} \right)$ attempted updates made, where $L$ is the lower-bound on the number of sub-MDPs posited by Lemma \ref{l6}. This scenario occurs when an update is attempted, only for the condition in \eqref{eq1} to not be met, resulting in an unsuccessful update attempt and the state being \textit{locked}. If further updates \textit{are possible}, regardless of the lock being placed, there may be up to $AL$ updates made before the state and its associated sub-MDP are unlocked.

    As such, we use an application of Hoeffding's Inequality to yield the following inequalities, bounding the difference between the expected Q-value and mean of encountered Q-values with high confidence. Solving for $q$ yields the result in the lemma statement.

    \begin{align*}
        \mathbb{P} \left( Q_t(s,a) - \mathbb{E}_T[Q_t(s,a)] > \epsilon \right) < \exp \left(  -2q\epsilon^2(1-\gamma)^2 \right) \\
        \frac{\delta}{AL \left( 1+\frac{A\log_\gamma^{A} \epsilon(1-\gamma)}{\epsilon(1-\gamma)} \right)} < \exp \left(  -2q\epsilon^2(1-\gamma)^2 \right)
    \end{align*}
\end{proof}

This lower-bound on sampling size is corroborated by VRQL and is needed to be held in order for the sample-complexity of learning for each sub-MDP to be equivalent to the asymptotic bound presented in Lemma \ref{l3}, derived from \cite{27}. If a state-action $(s,a)$ is sampled $q$ times and the update condition in \eqref{eq1} is not met, the state $s$ is \textit{locked} and further updates are not to take place. We elaborate on this in the following subsection.


\begin{figure*}[tp]
\label{fig1}
\centering
\begin{subfigure}[t]{0.4\linewidth} \label{figa}
    \centering
    \includegraphics[width=\linewidth]{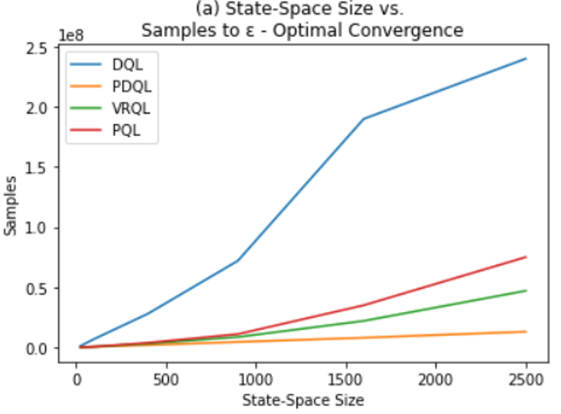}
    \caption{Average growth of convergence rate with respect to the state-space size, where the algorithm is said to have converged upon reaching $\epsilon$-optimality. }
\end{subfigure}\hfil
\begin{subfigure}[t]{0.42\linewidth} \label{figb}
    \centering
    \includegraphics[width=\linewidth]{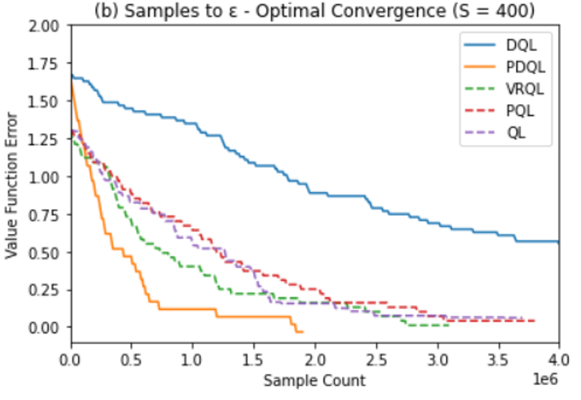}
    \caption[b]{Comparison of convergence rates between algorithms (Q-Learning, DQL, PQL, VRQL and PDQL) with respect to sample count, $t$, where mean error is measured as the expected difference between the current value function and the optimal value function, i.e. $\mathbb{E}_s[V_t(s) - V_*(s)]$. }
 \end{subfigure}
\vspace{0.3cm}
 \hrule
\end{figure*}


    

\subsection{The Lock Condition} \label{locksec}
The primary method by which we apply local approximation in the formulation of PDQL is through the \textit{locking/unlocking mechanism} of PDQL. For each state, PDQL assigns a boolean variable that indicates whether further updates are possible to its value. Following an attempted update, if the condition in \eqref{eq1} is not met, the update is not carried out and the state is \textit{locked} - barring it from future updates. The locking mechanism has two purposes: firstly, it is used to indicate convergence, i.e. when all states are locked, the algorithm has converged; secondly, it allows the algorithm to converge faster by focusing effort toward the \textit{unlocked} states.

Per Lemma \ref{l1}, for values $\epsilon, \gamma \in (0, 1)$, an update to a Q-value of a state $s_1 \in \mathcal{S}$ can affect the Q-values of the states within $\lceil  \log_\gamma \epsilon(1-\gamma)  \rceil$ transitions of $s_1$. This corresponds with Definition \ref{d3}, which defines sub-MDPs. We formulate the following lemma regarding this.

\begin{lemma} \textbf{(Unlocking Radius)} \label{unlockrad}
    If a state's value is affected by more than $\epsilon$ from an update to a state $s_1 \in \mathcal{S}$, it must lie within the state-space of a sub-MDP centered on $s_1$ with a radius of $\lceil  \log_\gamma \epsilon(1-\gamma)  \rceil$ transitions.
\end{lemma}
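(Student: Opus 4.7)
The plan is to prove the contrapositive: if $s \in \mathcal{S}$ satisfies $\mathcal{D}(s_1, s) > T$ with $T = \lceil \log_\gamma \epsilon(1-\gamma) \rceil$, then updating $Q(s_1, a_1)$ for any $a_1 \in \mathcal{A}$ changes $V(s)$ by at most $\epsilon$. This directly yields the lemma statement, since any state whose value is perturbed by more than $\epsilon$ must therefore lie within $T$ transitions of $s_1$, i.e., within the sub-MDP state-space in Definition~\ref{d3}.

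First I would invoke Proposition~\ref{ass2} (Locality), which restricts every transition to states at unit distance. By induction on path length, this implies that any transition walk originating at $s$ requires at least $\mathcal{D}(s_1, s)$ steps before it can first visit $s_1$. Consequently, in the Bellman expansion of $V(s)$ as an expected discounted cumulative reward, every term that depends on the Q-value at $(s_1, a_1)$ enters the sum at a step index of at least $\mathcal{D}(s_1, s)$ and is therefore multiplied by a discount factor no larger than $\gamma^{\mathcal{D}(s_1, s)}$.

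Next, I would bound the magnitude of change that can propagate. Since single-step rewards lie in $[0,1]$, every Q-value lies in $[0, 1/(1-\gamma)]$, so any modification $\Delta$ to $Q(s_1, a_1)$ satisfies $|\Delta| \le 1/(1-\gamma)$. Combining the previous step with a tail-sum argument identical to that in Lemma~\ref{l1}, the induced change in $V(s)$ is bounded by $\gamma^{\mathcal{D}(s_1, s)}/(1-\gamma)$. Under the assumption $\mathcal{D}(s_1, s) > T = \lceil \log_\gamma \epsilon(1-\gamma) \rceil$, solving the inequality $\gamma^{\mathcal{D}(s_1, s)}/(1-\gamma) \le \epsilon$ in the same manner as Lemma~\ref{l1} closes the proof of the contrapositive.

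The main obstacle will be formalizing the propagation step. Although intuitively the Bellman operator is a $\gamma$-contraction, one must justify that \emph{only} paths passing through $s_1$ inherit the perturbation $\Delta$, and that these paths contribute at a discount of at least $\gamma^{\mathcal{D}(s_1, s)}$. The cleanest route is an induction on distance $d = \mathcal{D}(s_1, s)$: writing the perturbed Bellman equation $V'(s) - V(s) = \gamma \, \mathbb{E}_{s'' \sim \mathcal{T}(\cdot \mid s, \pi(s))}[V'(s'') - V(s'')]$ and using Proposition~\ref{ass2} to note that every such $s''$ satisfies $\mathcal{D}(s_1, s'') \ge d-1$, we obtain $|V'(s) - V(s)| \le \gamma \cdot \max_{s'' : \mathcal{D}(s_1, s'') \ge d-1} |V'(s'') - V(s'')|$. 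Unrolling this $d$ times and applying the bound $|\Delta| \le 1/(1-\gamma)$ at the base case yields $|V'(s) - V(s)| \le \gamma^d/(1-\gamma)$, matching the informal argument above.
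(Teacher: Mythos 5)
Your argument is correct and follows the same route the paper intends: the paper's proof is only a one-line sketch citing Lemma~\ref{l1}, and your contrapositive formulation --- locality forces any path from $s$ to $s_1$ to have length at least $\mathcal{D}(s_1,s)$, so a perturbation of magnitude at most $1/(1-\gamma)$ at $s_1$ reaches $V(s)$ only after discounting by $\gamma^{\mathcal{D}(s_1,s)}$, giving $\gamma^{\mathcal{D}(s_1,s)}/(1-\gamma)\le\epsilon$ exactly as in Lemma~\ref{l1} --- is precisely the tail-sum argument that sketch gestures at, made rigorous via the distance-indexed contraction induction. No gaps; your version is in fact more complete than the paper's.
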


\begin{proof}
    \textit{Sketch.} This lemma is a consequence of Lemma \ref{l1}.
\end{proof}

With regard to Lemma \ref{unlockrad}, noting that an update on a state-action may result in updates to state-action Q-values in its vicinity, all states in the vicinity of $s_1$ must be subsequently \textit{unlocked} to generalize an $\epsilon$-optimal value function. That is, for the sub-MDP constructed within the state-space $\mathcal{S}_1$, where $\mathcal{S}_1 := \{ s \in \mathcal{S} \;\; | \;\; \mathcal{D}(s_1, s) < \log_\gamma (\epsilon(1-\gamma)) \}$, all $s \in \mathcal{S}_1$ are \textit{unlocked} within PDQL. This is conducted in line 9 of Algorithm \ref{alg2}.

Intuitively, the \textit{unlocking} procedure is a method of constructing sub-MDPs within $M$ based on the states that probably require updates. With minimal overlap, Lemma \ref{l6} indicates how many sub-MDPs can be unlocked simultaneously. Similarly, the \textit{locking} procedure diverts the algorithm's efforts to areas of the state-space that are more likely to require attention, where Lemmas \ref{l5} and \ref{l7} indicate precisely how much effort is required to be provided. With this, PDQL demonstrates the asymptotic sample-complexity derived in Theorem \ref{t1}.

The locking procedure in our work is in direct contrast with DQL \cite{16}, where a \textit{global unlocking} procedure is implemented instead of a \textit{local unlocking} resulting in greater sample-efficiency.

\subsection{Space-Complexity Analysis of PDQL}
Referring to Definition \ref{d2} for the space-complexity bounds on model-free RL algorithms, we briefly show PDQL is within the bound of $\widetilde{O}(S A)$. Algorithm \ref{alg2} stores the following: a record of up to $q$ Q-values for each state-action $(s, a) \in \mathcal{S} \times \mathcal{A}$, $O(SA)$; and a lock variable for each state, $O(S)$. It is realistic to assume the existence of an oracle that can be queried to yield the minimum number of transitions between two states due to the distance metric placed by Proposition \ref{ass1}. However, in its absence, a minor adjustment to the algorithm can be made such that the sub-MDP records are stored to facilitate the unlocking procedure, requiring the following space-complexity $O(L \log_\gamma ^ {A} \epsilon(1-\gamma))$. Simplifying this, the asymptotic space-complexity of PDQL is $O(qSA + S + \frac{2S}{\epsilon}\log\frac{2}{\delta})$, which is equivalent to $\widetilde{O}(S A)$.

\section{Experiments} \label{exp}

In order to demonstrate the results of this study, we have conducted experiments to observe the convergence rate of PDQL in various environments against similar tabular PAC-MDP algorithms. As our primary result is the removal of the logarithmic dependency on the state-space size, we deploy our learning algorithm in environments of different sizes to show the growth in the convergence rate of PQDL. We demonstrate our results in discretized versions of the \textit{Lunar Lander} environment from \textit{Gymnasium} \cite{gym}.

The parameters of our experiment are as follows: $\gamma = 0.9$; $\delta = 0.001$; $\epsilon = 0.01$; $S = \{ 50, 200, 500, 1000, 1500, 2000 \}$; with the state-wise reward scaled to the range $[0, 1]$. Considering that this algorithm is based on Q-Learning, this paper shows its comparison with Q-Learning, Delayed Q-Learning (DQL)\cite{16}, Phased Q-Learning (PQL) \cite{37} and Variance Reduced Q-Learning (VRQL) \cite{27}.



    
    
    
    
    

There are two characteristics of the data that we comment on. The first is the rate of convergence displayed in Figure b, which shows that PDQL has a faster rate of convergence than the other forms of Q-Learning. This also confirms the proposition that DQL and VRQL are conservative in the number of samples required prior to an update, noting its relatively slow rate of convergence when compared with the other algorithms. This was observed during all the trials recorded. The second is the rate of growth in the number of samples to convergence with respect to the size of the state-space, which scales more efficiently in $SA\log A$ as the bound suggests for PDQL than $SA \log(SA)$ for the remaining algorithms (Figure a).


\section{Conclusion}
The objective of our study has been to sharpen sample-complexity bounds of model-free RL under Propositions \ref{ass1} and \ref{ass2}. We do so by constructing local approximations of the larger MDP and learning the value function locally. We then combine the learnt values alongside the associated error to approximate the value function across the larger MDP with high confidence. The main result that we present is that an $\epsilon$-optimal value function can by generalized within $O \left( SA \log A  \right)$ timesteps with $1-\delta$ confidence. As such, the algorithm that we present in this paper, PDQL, demonstrates faster convergence than previous works.




\begin{ack}
This research is supported by the National Research Foundation, Singapore and DSO National Laboratories under the AI Singapore Programme (AISG Award No: AISG2-RP-2020-017). This research is also supported by the National Research Foundation, Prime Minister’s Office, Singapore under its Campus for Research Excellence and Technological Enterprise (CREATE) programme through the programme DesCartes. This research was also supported by MoE, Singapore, through the Tier-2 grant MOE2019-T2-2-040. 
\end{ack}



\bibliography{mybibfile}

\begin{thebibliography}{26}
\providecommand{\natexlab}[1]{#1}
\providecommand{\url}[1]{\texttt{#1}}
\expandafter\ifx\csname urlstyle\endcsname\relax
  \providecommand{\doi}[1]{doi: #1}\else
  \providecommand{\doi}{doi: \begingroup \urlstyle{rm}\Url}\fi

\bibitem[Azar et~al.(2011)Azar, Munos, Ghavamzadeh, and Kappen]{35}
M.~G. Azar, R.~Munos, M.~Ghavamzadeh, and H.~J. Kappen.
\newblock Speedy q-learning.
\newblock In \emph{NIPS}, 2011.

\bibitem[Dann et~al.(2018)Dann, Lattimore, and Brunskill]{51}
C.~Dann, T.~Lattimore, and E.~Brunskill.
\newblock Unifying pac and regret: Uniform pac bounds for episodic reinforcement learning, 2018.

\bibitem[Dong et~al.(2019)Dong, Wang, Chen, and Wang]{28}
K.~Dong, Y.~Wang, X.~Chen, and L.~Wang.
\newblock Q-learning with ucb exploration is sample efficient for infinite-horizon mdp, 2019.

\bibitem[El~Sallab et~al.(2017)El~Sallab, Abdou, Perot, and Yogamani]{22}
A.~El~Sallab, M.~Abdou, E.~Perot, and S.~Yogamani.
\newblock Deep reinforcement learning framework for autonomous driv-ing.
\newblock \emph{STAT}, 1050:\penalty0 8, 2017.

\bibitem[Fiechter(1994)]{30}
C.-N. Fiechter.
\newblock Efficient reinforcement learning.
\newblock In \emph{Proceedings of the Seventh Annual Conference on Computational Learning Theory}, COLT '94, page 88–97, New York, NY, USA, 1994. Association for Computing Machinery.
\newblock ISBN 0897916557.
\newblock \doi{10.1145/180139.181019}.

\bibitem[He et~al.(2022)He, Zhou, and Gu]{31}
J.~He, D.~Zhou, and Q.~Gu.
\newblock Nearly minimax optimal reinforcement learning for discounted mdps, 2022.

\bibitem[Hoeffding(1994)]{hoeff}
W.~Hoeffding.
\newblock Probability inequalities for sums of bounded random variables.
\newblock \emph{The collected works of Wassily Hoeffding}, pages 409--426, 1994.

\bibitem[Huang(2020)]{4}
Q.~Huang.
\newblock Model-based or model-free, a review of approaches in reinforcement learning.
\newblock In \emph{2020 International Conference on Computing and Data Science (CDS)}, pages 219--221, 2020.
\newblock \doi{10.1109/CDS49703.2020.00051}.

\bibitem[Jin et~al.(2018)Jin, Allen-Zhu, Bubeck, and Jordan]{29}
C.~Jin, Z.~Allen-Zhu, S.~Bubeck, and M.~I. Jordan.
\newblock Is q-learning provably efficient?
\newblock In S.~Bengio, H.~Wallach, H.~Larochelle, K.~Grauman, N.~Cesa-Bianchi, and R.~Garnett, editors, \emph{Advances in Neural Information Processing Systems}, volume~31. Curran Associates, Inc., 2018.

\bibitem[Kakade(2003)]{5}
S.~Kakade.
\newblock On the sample complexity of reinforcement learning.
\newblock 01 2003.

\bibitem[Kakade et~al.(2003)Kakade, Kearns, and Langford]{metric4}
S.~Kakade, M.~J. Kearns, and J.~Langford.
\newblock Exploration in metric state spaces.
\newblock In \emph{Proceedings of the 20th International Conference on Machine Learning (ICML-03)}, pages 306--312, 2003.

\bibitem[Kearns and Singh(1998)]{37}
M.~Kearns and S.~Singh.
\newblock Finite-sample convergence rates for q-learning and indirect algorithms.
\newblock \emph{Advances in neural information processing systems}, 11, 1998.

\bibitem[Li et~al.(2020)Li, Yang, and Luo]{metric1}
L.~Li, R.~Yang, and D.~Luo.
\newblock Focal: Efficient fully-offline meta-reinforcement learning via distance metric learning and behavior regularization.
\newblock \emph{arXiv preprint arXiv:2010.01112}, 2020.

\bibitem[Rajeswaran et~al.(2017)Rajeswaran, Kumar, Gupta, Vezzani, Schulman, Todorov, and Levine]{23}
A.~Rajeswaran, V.~Kumar, A.~Gupta, G.~Vezzani, J.~Schulman, E.~Todorov, and S.~Levine.
\newblock Learning complex dexterous manipulation with deep reinforcement learning and demonstrations.
\newblock \emph{arXiv preprint arXiv:1709.10087}, 2017.

\bibitem[Sinclair et~al.(2019)Sinclair, Banerjee, and Yu]{metric3}
S.~R. Sinclair, S.~Banerjee, and C.~L. Yu.
\newblock Adaptive discretization for episodic reinforcement learning in metric spaces.
\newblock \emph{Proceedings of the ACM on Measurement and Analysis of Computing Systems}, 3\penalty0 (3):\penalty0 1--44, 2019.

\bibitem[Song and Sun(2019)]{metric2}
Z.~Song and W.~Sun.
\newblock Efficient model-free reinforcement learning in metric spaces.
\newblock \emph{arXiv preprint arXiv:1905.00475}, 2019.

\bibitem[Strehl et~al.(2006)Strehl, Li, Wiewiora, Langford, and Littman]{16}
A.~Strehl, L.~Li, E.~Wiewiora, J.~Langford, and M.~Littman.
\newblock Pac model-free reinforcement learning.
\newblock volume 2006, 01 2006.
\newblock \doi{10.1145/1143844.1143955}.

\bibitem[Szepesv\'{a}ri(1997)]{53}
C.~Szepesv\'{a}ri.
\newblock The asymptotic convergence-rate of q-learning.
\newblock In M.~Jordan, M.~Kearns, and S.~Solla, editors, \emph{Advances in Neural Information Processing Systems}, volume~10. MIT Press, 1997.

\bibitem[Szita and Szepesvári(2010)]{32}
I.~Szita and C.~Szepesvári.
\newblock Model-based reinforcement learning with nearly tight exploration complexity bounds.
\newblock pages 1031--1038, 08 2010.

\bibitem[Tang et~al.(2025)Tang, Abbatematteo, Hu, Chandra, Mart{\'\i}n-Mart{\'\i}n, and Stone]{robot1}
C.~Tang, B.~Abbatematteo, J.~Hu, R.~Chandra, R.~Mart{\'\i}n-Mart{\'\i}n, and P.~Stone.
\newblock Deep reinforcement learning for robotics: A survey of real-world successes.
\newblock In \emph{Proceedings of the AAAI Conference on Artificial Intelligence}, volume~39, pages 28694--28698, 2025.

\bibitem[Towers et~al.(2024)Towers, Kwiatkowski, Terry, Balis, De~Cola, Deleu, Goul{\~a}o, Kallinteris, Krimmel, KG, et~al.]{gym}
M.~Towers, A.~Kwiatkowski, J.~Terry, J.~U. Balis, G.~De~Cola, T.~Deleu, M.~Goul{\~a}o, A.~Kallinteris, M.~Krimmel, A.~KG, et~al.
\newblock Gymnasium: A standard interface for reinforcement learning environments.
\newblock \emph{arXiv preprint arXiv:2407.17032}, 2024.

\bibitem[Tsitsiklis(1994)]{52}
J.~N. Tsitsiklis.
\newblock Asynchronous stochastic approximation and q-learning.
\newblock \emph{Machine learning}, 16:\penalty0 185--202, 1994.

\bibitem[Wainwright(2019)]{27}
M.~J. Wainwright.
\newblock Variance-reduced $q$-learning is minimax optimal, 2019.

\bibitem[Watkins and Dayan(1992)]{34}
C.~J. Watkins and P.~Dayan.
\newblock Q-learning.
\newblock \emph{Machine learning}, 8:\penalty0 279--292, 1992.

\bibitem[Xiao et~al.(2022)Xiao, Liu, Warnell, and Stone]{robot2}
X.~Xiao, B.~Liu, G.~Warnell, and P.~Stone.
\newblock Motion planning and control for mobile robot navigation using machine learning: a survey.
\newblock \emph{Autonomous Robots}, 46\penalty0 (5):\penalty0 569--597, 2022.

\bibitem[Zhang et~al.(2020)Zhang, Zhou, and Ji]{36}
Z.~Zhang, Y.~Zhou, and X.~Ji.
\newblock Model-free reinforcement learning: from clipped pseudo-regret to sample complexity, 2020.

\end{thebibliography}

\end{document}